\newtheorem{theorem}{Theorem}
\newtheorem{assumption}{Assumption}
\newtheorem{remark}{Remark}
\theoremstyle{plain}
\newlength{\cellsize}
\newcolumntype{C}{>{$}c<{$}} 
\newtcolorbox[auto counter]{examplebox}{
  title=Example~\thetcbcounter,
}
\def\eqref#1{equation~\ref{#1}}
\def\1{\bm{1}}
\def\vmu{{\bm{\mu}}}
\def\vtheta{{\bm{\theta}}}
\def\vphi{{\bm{\phi}}}
\def\va{{\bm{a}}}
\def\vs{{\bm{s}}}
\DeclareMathAlphabet{\mathsfit}{\encodingdefault}{\sfdefault}{m}{sl}
\SetMathAlphabet{\mathsfit}{bold}{\encodingdefault}{\sfdefault}{bx}{n}
\def\sA{{\mathcal{A}}}
\def\sD{{\mathcal{D}}}
\def\sN{{\mathcal{N}}}
\def\sI{{\mathcal{I}}}
\def\sL{{\mathcal{L}}}
\def\sS{{\mathcal{S}}}
\DeclareMathOperator*{\argmax}{arg\,max}
\newcommand{\RL}{\textsc{rl}}
\newcommand{\KL}{\textsc{kl}}
\newcommand{\PPO}{\textsc{ppo}}
\newcommand{\pitheta}{{\pi_{\vtheta}}}
\title{Centralized Adaptive Sampling for Reliable\\Co-Training of Independent Multi-Agent Policies}
\author{Nicholas E.~Corrado\textsuperscript{1}, Josiah P.~Hanna\textsuperscript{1}}
\@ \textit{each agent's marginal action distribution}, whereas CoSER targets \textit{joint} sampling error. 
\keywords{multi-agent reinforcement learning, policy gradient, adaptive sampling, on-policy} 
\begin{document}

\makeCover  
\maketitle  

\begin{abstract}
Independent on-policy policy gradient algorithms are widely used for multi-agent reinforcement learning (MARL) in cooperative and no-conflict games, but they are known to converge sub-optimally when each agent’s individual policy gradient points away from an optimal joint equilibrium~\citep{claus1998dynamics, lyu2021contrasting, papoudakis2020comparative, christianos2022pareto}. 
%
Going beyond prior work, we observe that sub-optimal convergence can still arise \textit{even when the expected individual policy gradients of each agent point toward the optimal joint solution.} 
After collecting a finite set of trajectories, stochasticity in independent action sampling can cause the joint data distribution to deviate from the expected joint on-policy distribution. 
This \textit{sampling error} w.r.t.\@ the joint on-policy distribution  produces inaccurate gradient estimates that can make agents converge sub-optimally.
We hypothesize that joint sampling error can be reduced through coordinated action selection and that doing so will increase the \textit{reliability} of policy gradient learning in MARL (\textit{i.e.}, the probability of converging to an optimal joint policy).
To test this hypothesis, we first introduce an adaptive action sampling approach to reduce joint sampling error in the Centralized Training with Decentralized Execution setting. Our method, \textbf{Co}operative \textbf{S}ampling \textbf{E}rror \textbf{R}eduction (CoSER), continually adapts a centralized behavior policy to place higher probability on joint actions that are under-sampled w.r.t.\@ the current joint policy.
We then empirically evaluate CoSER on a diverse set of multi-agent games and demonstrate that (1) CoSER reduces joint sampling error more efficiently than independent on-policy sampling and (2) this reduction increases the reliability of independent policy gradient algorithms.
\end{abstract}

\section{Introduction}

On-policy policy gradient methods are among the most popular algorithms for multi-agent reinforcement learning (MARL) in cooperative and no-conflict games~\citep{de2020independent, yu2022surprising, zhou2021smarts}.\footnote{In no-conflict games, all agents share the same set of preferred equilibria. Cooperative games are a subset of no-conflict games in which all agents share the same reward function.}
A common approach is to treat agents as independent learners: each agent samples trajectories independently from its own policy (without observing the actions of other agents), estimates a Monte Carlo approximation of its policy gradient~\citep{sutton2018reinforcement}, and then performs a local policy update to maximize its expected return via gradient ascent.
%
While independent algorithms have demonstrated strong empirical performance~\citep{papoudakis2020comparative, papoudakis2020benchmarking} and powered several high-profile success stories~\citep{de2020independent, yu2022surprising, zhou2021smarts}, it is well-understood that they may not converge to the most preferred equilibrium even in no-conflict games~\citep{claus1998dynamics, lyu2021contrasting, papoudakis2020comparative, christianos2022pareto}.
In this paper, we identify a novel failure mode for independent policy gradient learning in MARL: \textit{even when the expected policy gradients of each agent align with optimal behavior}, stochasticity in action sampling can nevertheless cause agents to converge to sub-optimal solutions. We illustrate this phenomenon with the following example. 

\begin{wrapfigure}{R}{0.3\linewidth}
\vspace{-1.2em}
    \def\arraystretch{1.5}
    \centering
    \begin{adjustbox}{width=\linewidth}
    \begin{tabular}{cCCC}
        \multicolumn{2}{c}{} & \multicolumn{2}{c}{Agent 2} \\
        && \multicolumn{1}{|C}{A} & B \\
        \cline{2-4}
        \multirow{2}{*}{\rotatebox{90}{Agent 1}}
                & A &\multicolumn{1}{|C}{12,12} & 0,6 \\
        & B &\multicolumn{1}{|C}{6,0} & 2,2 \\
    \end{tabular}
    \end{adjustbox}
        \caption{Matrix game where $r_1,r_2$ denotes rewards for Agents 1 and 2, respectively.}
    \label{fig:matrix_game_example}
    \vspace{-0.5em}
\end{wrapfigure}

Consider the $2\times2$ matrix game in Fig.~\ref{fig:matrix_game_example}. Suppose each agent assigns equal probability to actions $A$ and $B$ and selects actions independently. For both agents, the expected reward of $A$ is $(12 + 0)/2 = 6$, and the expected reward of $B$ is $(6 + 2)/2 = 4$. Thus, both agents are incentivized to increase the probability of $A$, steering them toward the optimal joint action $(A, A)$.
Now suppose the agents play the game four times. In expectation, each agent will sample both actions twice and observe each joint action once. However, due to randomness in action selection, Agent 1 may sample $A, B, A, B$ while Agent 2 samples $B, A, B, A$ so that $(A,A)$ and $(B,B)$ are \textit{under-sampled} w.r.t.\@ the expected joint on-policy distribution. 
Consequently, both agents observe reward $0$ for action $A$ and reward $6$ for action $B$ and thus increase the probability of $B$, driving agents toward the sub-optimal strategy $(B, B)$.
The core issue is \textit{joint sampling error}: randomness in action sampling causes the empirical joint data distribution to deviate from the expected joint on-policy distribution, leading to inaccurate policy gradient estimates.
Moreover, joint sampling error exists even though both agents sample $A$ and $B$ twice and thus have zero sampling error w.r.t.\@ their own policies.
\begin{figure*}[t]
    \centering
    \includegraphics[width=\linewidth]{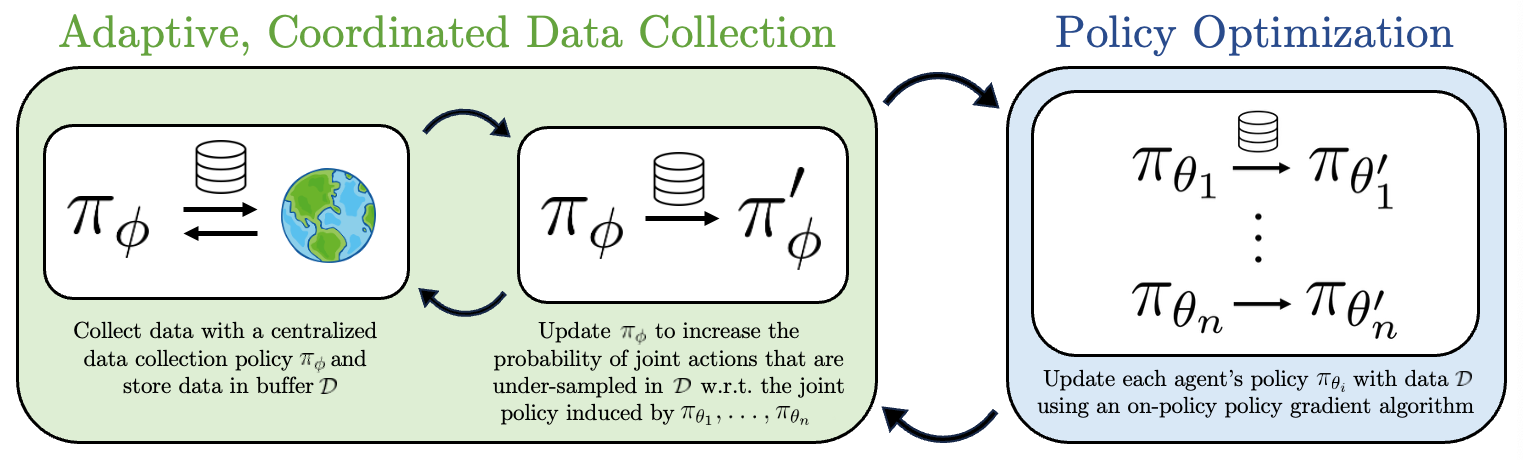}
    \caption{Rather than collecting data $\sD$ by sampling actions from each agent's current policy $\pi_{\vtheta_i}$, CoSER collects data with a centralized policy $\pi_\vphi$ that it continually adapts to reduce joint sampling error in $\sD$ with respect to the joint policy. 
    }
    \label{fig:overview}
\vspace{-1.8em}
\end{figure*}

Under on-policy sampling, the only way to reduce sampling error is to collect more data.
Recently, \citet{corrado_props_2023} introduced an action sampling algorithm (PROPS) that reduces sampling error more efficiently than on-policy sampling. 
However, this work focused on reducing sampling error w.r.t.\@ a single policy in single-agent RL settings.
As the above example shows, reducing sampling error w.r.t.\@ each agent’s policy individually does not necessarily reduce sampling error w.r.t.\@ the joint policy.
Thus, agents must coordinate action selection to reduce joint sampling error.
These observations motivate the central question of this work: \textit{Can adaptive, coordinated action sampling reduce joint sampling error and, in doing so, increase the reliability of multi-agent policy gradient learning?} Here, reliability refers to the probability of converging to an optimal joint policy.

To answer this question, we introduce \textbf{Co}operative \textbf{S}ampling \textbf{E}rror \textbf{R}eduction (CoSER), an action sampling algorithm that adaptively corrects joint sampling error during multi-agent on-policy data collection (Fig.~\ref{fig:overview}).
Rather than sampling actions from each agent independently, CoSER samples actions from a centralized data collection policy that we continually update to increase the probability of under-sampled joint actions.
We first evaluate CoSER on a diverse set of no-conflict multi-agent games and show that it reduces joint sampling error more efficiently than standard on-policy sampling.
Next, we answer our central question affirmatively and show that reducing joint sampling error during independent on-policy policy gradient learning increases the fraction of training runs that converge to an optimal joint policy.
While centralized action sampling may be challenging in tasks with many agents due to the exponential growth of the joint action space, we view CoSER as a first step toward understanding and mitigating joint sampling error in multi-agent learning.
In summary, our contributions are:

\begin{enumerate}
    \item We identify a novel failure mode of independent on-policy policy gradient algorithms: even when the expected gradient points toward optimal behavior, joint sampling error can cause convergence to sub-optimal solutions.

    \item We show how this failure mode can be mitigated by coordinating action selection to reduce joint sampling error. To evaluate this idea empirically, we develop CoSER, a data collection algorithm that adaptively increases the probability of under-sampled joint actions.
    
    \item  We empirically demonstrate that CoSER reduces joint sampling error more efficiently than on-policy sampling and consequently makes independent on-policy policy gradient MARL converge to an optimal joint policy more reliably.
\end{enumerate}

\section{Related Work}

\textbf{Equilibrium selection.} 
MARL algorithms may converge to sub-optimal solutions when each agent’s expected gradient points away from the optimal equilibrium. 
%
%
Prior works address equilibrium selection by learning joint action-values~\citep{christianos2022pareto, littman2001friend, sunehag2017value, rashid2020monotonic, son2019qtran} or by using optimism~\citep{matignon2007hysteretic, palmer2017lenient, zhao2023optimistic}.
Most methods are off-policy with the exception of Pareto Actor-Critic~\citep{christianos2022pareto} and Optimistic Multi-Agent Policy Gradient~\citep{zhao2023optimistic}, which modify on-policy gradients to promote convergence to a Pareto-optimal equilibrium.
Prior works modify the learning objective because directly optimizing the expected return can lead to sub-optimal equilibria.
In contrast, we study settings where the expected gradients already point toward the optimal equilibrium, but joint sampling error causes convergence to sub-optimal outcomes. Rather than altering the objective or gradient update, we improve reliability by reducing joint sampling error through adaptive data collection.

\textbf{Reducing sampling error via adaptive data collection.} Prior work in single-agent RL has shown that adaptive action sampling can reduce sampling error more efficiently than standard on-policy sampling. \citet{zhong2022robust} first demonstrated this idea theoretically, and \citet{corrado_props_2023} introduced a practical and scalable algorithm (PROPS) for applying it in policy gradient learning. \citet{mukherjee2022revar} uses adaptive sampling in the bandit setting, and other bandits works~\citep{tucker2022variance, wan2022safe, konyushova2021active} use data-conditioned but non-adaptive sampling strategies. 
These methods focus on single-agent RL or policy evaluation. In contrast, we focus on multi-agent settings and highlight unique challenges posed by joint sampling error.

\textbf{Reducing sampling error via importance sampling.} Several works use importance sampling to reduce sampling error without collecting additional data for policy evaluation~\citep{precup2000eligibility}, policy gradient RL \citep{papini2018stochastic,metelli2018policy, hanna2021importance}, and temporal difference learning~\citep{ Pavse2020ReducingSE}. Similar techniques exist for contextual bandits~\citep{li2015toward, narita2019efficient}. These works reduce sampling error by reweighting previously collected data. 
In contrast, our work focuses on  whether sampling error can be controlled as data is collected.

\textbf{Coordinated Exploration.} In MARL, data collection techniques often incentivize agents to maximize state-action coverage and explore states where multi-agent interactions are likely
~\citep{mahajan2019maven, wang2019influence, liu2021cooperative, li2022pmic, gupta2021uneven, zheng2021episodic, zhang2023self}. 
%
While these works guide action selection to explore new joint behaviors, our work adapts action selection to more accurately approximate the on-policy distribution and improve policy gradient estimates.

\section{Preliminaries}

In this section, we formalize the multi-agent RL setting and discuss relevant multi-agent policy gradient algorithms.

\subsection{Multi-Agent Reinforcement Learning}

We model the MARL environment as a fully observable, finite-horizon stochastic game $(\sI, \sS, \sA, p, r, \gamma)$ with $n$ agents $\sI = \{1,\dots,n\}$, joint state space $\sS = \sS_1 \times \dots \times \sS_n$, and joint action space $\sA = \sA_1 \times \dots \times \sA_n$. Each agent $i$ has a discrete action space $\sA_i = \{1, \dots, k\}$ and a stochastic policy $\pi_{\vtheta_i} : \sS_i \times \sA_i \to [0,1]$ parameterized by $\vtheta_i$.
The joint policy $\pi_\vtheta = (\pi_{\vtheta_1}, \dots, \pi_{\vtheta_n})$ with parameters $\vtheta = (\vtheta_1, \dots, \vtheta_n)$ defines a distribution over joint actions conditioned on the joint state. 
For brevity, we often write $\pi_i := \pi_{\vtheta_i}$ and $\pi := \pi_\vtheta$.
The transition function $p : \sS \times \sA \times \sS \to [0,1]$ specifies the probability of transitioning to the next joint state. The reward function $r : \sS \times \sA \to \mathbb{R}^n$ returns a reward vector $(r_1, \dots, r_n)$, where $r_i$ is the reward for agent $i$.
Each agent’s expected return is
$
J_i(\vtheta) = \mathbb{E}_{\tau \sim \pi_\vtheta} \left[\sum_{t=0}^H \gamma^t r_i(\vs_t, \va_t)\right],
$
where random variable $H$ is the horizon,
and the global objective is to maximize \textit{welfare}, defined as
$
J(\vtheta) = \sum_{i \in \sI} J_i(\vtheta).
$
We focus on \textit{no-conflict games}, where all agents share the same set of optimal joint policies. We refer to the policies used for data collection as \textit{behavior policies} and the policies being optimized as 
\textit{target policies}.

\subsection{On-Policy Policy Gradient Algorithms}
On-policy policy gradient algorithms perform gradient ascent over policy parameters to maximize each agent's expected return $J_i(\vtheta)$. 
For exposition, we assume finite state and action spaces so that the policy gradient w.r.t.\@ agent $i$ can be written as:
\begin{equation}\label{eq:joint_pg}
    \nabla J_i(\vtheta) \propto \sum\nolimits_{(\vs,\va_i, \va_{-i})\in\sS\times\sA}
    d_{\pi}(\vs,\va_i, \va_{-i})\left[A_i^{\pi}(\vs,\va_i, \va_{-i}) \nabla_{\vtheta_i} \log \pi_i (\va_i|\vs_i)\right]
\end{equation}
where $\va_i$ denotes agent $i$’s action, $\va_{-i}$ denotes the actions of all agents except agent $i$, $d_{\pi}(\vs,\va_i,\va_{-i})$ denotes the joint state–action visitation distribution, and $A_i^{\pi}(\vs,\va_i,\va_{-i})$ denotes the advantage of agent $i$.
%
In practice, $\nabla J_i(\vtheta)$ is approximated with Monte Carlo samples collected from $\pi$, and an estimate of $A^{\pi}$ is used in place of the true advantage~\citep{schulman2015high}.
%
Independent algorithms like IPPO~\citep{de2020independent} apply single-agent RL algorithms to each agent, treating other agents as part of the environment. 
Since independent updates may converge to sub-optimal equilibria~\citep{claus1998dynamics}, many algorithms use Centralized Training with Decentralized Execution (CTDE), 
which gives agents access to global information during training while requiring local execution. 
%
%
CTDE methods typically use centralized critics to improve credit assignment~\citep{foerster2018counterfactual, kuba2021trust, yu2022surprising, ma2022value, zhong2024heterogeneous}.
We view CTDE methods like MAPPO~\citep{yu2022surprising} as independent algorithms because their centralized critics depend only on joint observations, not joint actions.

\section{Joint Sampling Error in MARL}
\label{sec:sampling_error}

We now present our primary contribution: a formal description of how joint sampling error produces inaccurate policy gradient estimates 
and how adaptive sampling can reduce joint sampling error.
%
%
%
Observe that the policy gradient of agent $i$ in Eq.~\ref{eq:joint_pg} is a linear combination of the gradient for each  $(\vs_i, \va_i)$ pair $\nabla_{\vtheta_i}  \log \pi_i (\va_i|\vs_i)$ weighted by $d_{\pi}(\vs,\va_i, \va_{-i})A^{\pi}(\vs,\va_i, \va_{-i})$. 
Crucially, this weighting depends on the \textit{joint} action $(\va_i, \va_{-i})$.
%
Let $\sD$ be a dataset of trajectories.
It is straightforward to show that the Monte Carlo estimate of the policy gradient can be written in a similar form as Eq.~\ref{eq:joint_pg} except with the true state-action visitation distribution replaced with the empirical visitation distribution $d_\sD(\vs, \va_i, \va_{-i})$, denoting the fraction of times $(\vs, \va_i, \va_{-i})$ appears in $\sD$~\citep{hanna2021importance}:
\begin{equation}\label{eq:policy-gradient}
    \nabla \widehat J_i(\vtheta) = \sum\nolimits_{(\vs,\va_i, \va_{-i})\in\sS\times\sA} d_{\sD}(\vs,\va_i, \va_{-i})\left[A_i^{\pi}(\vs,\va_i, \va_{-i}) \nabla_{\vtheta_i} \log \pi_i (\va_i|\vs_i)\right].
\end{equation}
By comparing Eq.~\ref{eq:joint_pg} and Eq.~\ref{eq:policy-gradient}, we can see how joint sampling error affects gradient estimation. When $(\vs, \va_i, \va_{-i})$ is over-sampled (\textit{i.e.}, $d_\sD(\vs,\va_i, \va_{-i}) > d_{\pi}(\vs,\va_i, \va_{-i})$), then $\nabla_{\vtheta_i} \log \pi_i(\va_i|\vs_i)$ contributes more to the overall gradient than it should. 
Similarly, when $(\vs,\va_i, \va_{-i})$ is under-sampled, $\nabla_{\vtheta_i}  \log \pi_i(\va_i|\vs_i)$ contributes less than it should.
We now provide a concrete example based on the matrix game in Fig.~\ref{fig:matrix_game_example} illustrating how small amounts of joint sampling error can cause the wrong actions to be reinforced---even when agents have access to their true advantages and have zero sampling error in the marginal visitation distribution $d_{\pi_i}(\vs,\va_i)$ of each policy individually.

\begin{tcolorbox}[
                  boxsep=2pt,
                  title=\textbf{Example 1:} Joint sampling error can cause incorrect policy gradient updates,
                  left=3pt,right=3pt,
                  ]

Consider two policies $\pi_1, \pi_2$ in an MDP with two discrete actions $A$ and $B$. 
Assume a direct parameterization $\pi_i(A|\vs) = \vtheta_{i, \vs}$, $\pi_i(B|\vs) = 1-\vtheta_{i, \vs}$ with $\vtheta_{i, \vs_0} = 0.5$ so that each policy places equal probability on both actions in $\vs_0$ and thus equal probability on each joint action. Then $\forall i$,
\begin{equation*}
\begin{split}
        \nabla \log \pi_i(A|\vs_0) = -\nabla \log \pi_i(B|\vs_0),  \qquad
    d_{\pi_i}(\vs_0, A) = d_{\pi_i}(\vs_0, B).
\end{split}
\end{equation*}
Suppose that in a particular state $\vs_0$, the advantages $A_i^\pi(\vs_0, \va_{i}, \va_{-i})$ w.r.t.\@ both policies are
$A_i^{\pi}(\vs_0, A, A) = 7;\;  A_i^{\pi}(\vs_0, A, B) = -5;\;
A_i^{\pi}(\vs_0, B, A) = 1;\;  A_i^{\pi}(\vs_0, B, B) = -3$, $i=1,2$.
%
Then, the expected gradient of $\pi_i$ increases the probability of sampling $A$ and thus increases the probability of observing the optimal joint action $(A, A)$:
\begin{equation}
  \sfrac{2}{4}\cdot (7 - 5) \cdot\nabla \log \pi_i(A|\vs_0) + \sfrac{2}{4}\cdot (1 - 3) \cdot\nabla \log \pi_i(B|\vs_0) =
  2\nabla \log \pi_i(A|\vs_0)  
\end{equation}
With on-policy sampling, after 4 visits to $\vs_0$, each joint action will be observed once in expectation.
However, if we actually observe $(A, B)$ 2 times and $(B, A)$ 2 times, a Monte Carlo estimate of each policy gradient yields
\begin{equation}
  \sfrac{2}{4}\cdot (-5) \cdot\nabla \log \pi_i(A|\vs_0) + \sfrac{2}{4}\cdot (-3) \cdot\nabla \log \pi_i(B|\vs_0) =
   -\nabla \log\pi_i(A|\vs_0)  
\end{equation}
which \textit{decreases} the probability of sampling the optimal $(A, A)$ action.
We emphasize that this issue arises despite having zero sampling error w.r.t.\@ the expected on-policy distribution of each policy individually (\textit{i.e.} both agents sample $A$ twice and $B$ twice.)
\end{tcolorbox}

With on-policy sampling, joint sampling error vanishes only in the limit of infinite data.
To more rapidly reduce joint sampling error, agents can instead coordinate their action selection: if a joint action is under-sampled at $\vs$, agents should increase the probability of sampling that joint action at $\vs$ in the future.
Continuing with Example 1, suppose the agents visit $\vs_0$ 4 more times.
To achieve zero sampling error, the agents should observe $(A, A)$ and $(B, B)$ twice each.
Since independent on-policy sampling gives no control over the joint distribution, they may observe $(A, B)$ and $(B, A)$ again.
If they sample their next actions from a distribution that puts probability 1 on $(A, A)$ on the first two visits to $\vs_0$ and probability 1 on $(B, B)$ on the next two, the aggregate batch of data will exactly match the expected joint distribution.

In tabular \textit{single-agent} settings, \citet{zhong2022robust} and \citet{corrado_props_2023} proved that 
selecting the most under-sampled action at each state produces an empirical state-action distribution that converges to expected visitation distribution at a faster rate than on-policy sampling.
In multi-agent settings, one might try to apply this heuristic to each agent independently.
However, our next example shows that reducing sampling error w.r.t.\@ each agent may not reduce joint sampling error.
\begin{tcolorbox}[
                  boxsep=2pt,
                  title=\textbf{Example 2:} Independent adaptive sampling may not decrease joint sampling error,
                  left=3pt,right=3pt,
                  label=example2
                  ]

Consider two policies $\pi_1$ and $\pi_2$ that put equal probability on actions $A$ and $B$ so that under on-policy sampling, each joint action is observed equally often in expectation. Now suppose each agent always selects the action most under-sampled relative to its own policy. At $t = 0$, both actions are equally under-sampled, so both agents sample from $\pi_i$. Without loss of generality, suppose agent 1 chooses $A$ and agent 2 chooses $B$. 
At $t = 1$, agent 1 selects $B$ and agent 2 selects $A$, since these actions are now under-sampled. 
This pattern repeats: the agents choose $(A, B)$ at even timesteps and $(B, A)$ at odd timesteps. 
Consequently, joint actions $(A, A)$ and $(B, B)$ are never sampled, and joint sampling error does not decrease as $t\to\infty$.
\end{tcolorbox}


This example highlights a key point: correcting joint sampling error requires agents to coordinate their action selection. 
Building upon these ideas, we now present a \textit{centralized} adaptive sampling algorithm to correct joint sampling error.




\section{Reducing Joint Sampling Error}

In this section, we introduce an adaptive sampling algorithm to reduce joint sampling error in multi-agent on-policy data collection. 
Algorithm~\ref{alg:on_policy_pg_adaptive} outlines our CTDE framework in which agents collect data with a centralized behavior policy $\pi_\vphi: \sS\times\sA\to [0, 1]$ to train decentralized target policies $\pi_{\vtheta_i}: \sS_i\times\sA_i\to [0, 1] $. 
First, we initialize the behavior policy to be equal to the target policy: $\pi_\vphi(\va|\vs) = \prod_{i=1}^n \pi_{\vtheta_i}(\va_i|\vs), \forall \vs$.
%
At each step, agents act according to $\pi_\vphi$ and add the resulting transition to a buffer $\sD$.
Every $m$ steps, $\vphi$ is updated to increase the probability of under-sampled joint actions in $\sD$ w.r.t.\@ $\pi_\vtheta$. 
Every $n$ steps, each agent updates its policy parameters $\vtheta_i$ using data from $\sD$.
To keep the empirical distribution of $\sD$ close to the expected joint on-policy distribution, we continually adapt $\pi_\vphi$ to place more probability on joint actions that are under-sampled w.r.t.\@ $\pi_\vtheta$. \citet{corrado_props_2023} recently introduced an algorithm, PROPS, for making such updates in \textit{single-agent} settings. 
%
%
In what follows, we first review PROPS and describe how its straightforward extension to the multi-agent setting fails to correct joint sampling error. Then, we introduce a new method, CoSER, that uses a centralized behavior policy to reduce joint sampling error.



\subsection{Behavior Policy Update: Proximal Robust On-Policy Sampling}

PROPS is based on a simple idea: starting at $\vphi = \vtheta$, gradient ascent on the log-likelihood $\sL_\text{LL}(\vphi) = \sum_{(\vs,\va) \in \sD} \log \pi_\vphi(\va|\vs)$ adjusts $\vphi$ to match the empirical distribution of $\sD$, increasing the probability of over-sampled actions and decreasing that of under-sampled actions. Taking a step in the opposite direction thus increases the probability of under-sampled actions.
\citet{zhong2022robust} proved that adjusting the behavior policy with a single gradient step in the direction of $-\nabla_\vphi \sL_\text{LL}(\vphi) \big|_{\vphi = \vtheta}$ at each timestep improves the rate at which the empirical data distribution converges to the expected on-policy distribution.
To enable larger behavior policy updates that more aggressively correct sampling error, \citet{corrado_props_2023} instead perform gradient ascent on a PPO-inspired clipped surrogate
\begin{equation}
\sL(\vphi, \vtheta, \vs, \va, \varepsilon) = \min\left[
-\frac{\pi_\vphi(\va|\vs)}{\pi_\vtheta(\va|\vs)},
-\mathtt{clip}\left(\frac{\pi_\vphi(\va|\vs)}{\pi_\vtheta(\va|\vs)}, 1 - \varepsilon, 1 + \varepsilon\right)
\right],
\label{eq:props}
\end{equation}
where $\mathtt{clip}$ bounds the ratio ${\pi_\vphi(\va|\vs)}/{\pi_\vtheta(\va|\vs)}$ to the interval $[1 - \varepsilon, 1 + \varepsilon]$. This objective prevents destructively large updates: it increases the probability of under-sampled actions by at most a factor of $1 + \varepsilon$, and decreases that of over-sampled actions by at most $1 - \varepsilon$. As in PPO, this clipping enables stable multi-epoch minibatch training while ensuring moderate adjustments to action probabilities.

The natural extension of PROPS to the multi-agent setting uses independent behavior policies $\pi_{\vphi_i}(\va_i|\vs_i)$ with the same parameterization as $\pi_{\vtheta_i}$, yielding a \textit{decentralized} behavior policy $\pi_\vphi(\va|\vs) = \prod_{i=1}^n \pi_{\vphi_i}(\va_i|\vs_i)$ where $\vphi = (\vphi_1, \dots,\vphi_n)$.
We call this extension Multi-Agent PROPS (MA-PROPS) and provide pseudocode in Algorithm~\ref{alg:maprops}. At each behavior update, we set $\vphi_i \gets \vtheta_i$ and perform minibatch gradient ascent on $\sL$.
Since $\pi_\vphi$ is decentralized, MA-PROPS will correct sampling error w.r.t.\@ each agent individually but may not reduce joint sampling error (see Example 2 in Section~\ref{sec:sampling_error}).
In the next section, we describe how to correct joint sampling error.

\begin{algorithm}[t]
    \caption{On-policy policy gradient algorithm with adaptive sampling} 
    \begin{algorithmic}[1]
        \STATE \textbf{Inputs}: Target batch size $n$, behavior update frequency $m$
        \STATE \textbf{Output:} Target policy parameters $\vtheta_1, \dots, \vtheta_n$.
        \STATE Initialize target policy parameters $\vtheta_1, \dots, \vtheta_n$, initialize behavior policy parameters $\vphi$ so that $\pi_\vphi \equiv \pi_\vtheta$, and initialize empty buffer $\sD$.
        
        \FOR{timestep $t = 1, 2, \dots$} 
            \STATE Collect one transition with $\pi_\vphi(\cdot|\vs)$, add it to $\sD$.
            \IF{$t \mod m \equiv 0$}
                \STATE Update $\vphi$ with $\sD$ using   Algorithm~\ref{alg:maprops}.\label{alg:on_policy_pg_adaptive_behavior_loop_end}
            \ENDIF
            \IF{$t \mod n \equiv 0$}
                \STATE Update ${\vtheta_1}, \dots, \vtheta_n$ with $\sD$ using an on-policy policy gradient algorithm.\label{alg:on_policy_pg_adaptive_behavior_target_update}
            \ENDIF
        \ENDFOR
    \STATE \textbf{return} $\vtheta_1, \dots, \vtheta_n$
    \end{algorithmic}
    \label{alg:on_policy_pg_adaptive}
\end{algorithm}

\begin{algorithm}[t]
    \caption{Behavior policy update with CoSER or MA-PROPS}
    \begin{algorithmic}[1]
        \label{alg:maprops}
        \STATE \textbf{Inputs:} Joint target policy parameters $\vtheta = (\vtheta_1, \dots, \vtheta_n)$, buffer $\sD$, target \KL{} $\delta$, clipping coefficient $\epsilon$, \texttt{n\_epoch},  \texttt{n\_minibatch}.
        \STATE \textbf{Output:} Behavior policy parameters $\vphi$.
        \IF{CoSER}
            \STATE Define \textit{centralized} behavior policy $\pi_\vphi(\cdot|\vs) := \sigma(\log\pi_\vtheta(\cdot |\vs) + \Delta_\vphi(\vs))$
            \STATE Initialize $\vphi$ such that $\Delta_\vphi(\vs) = \mathbf{0}, \forall \vs \in \sS$ so that $\pi_\vphi \equiv \pi_\vtheta$
        \ELSIF{MA-PROPS}
            \STATE Define \textit{decentralized} behavior policy $\pi_\vphi(\va|\vs) = \prod_{i=1}^n\pi_{\vphi_i}(\va_i|\vs_i)$. 
            \STATE Initialize $(\vphi_1,\dots,\vphi_n) \gets (\vtheta_1, \dots, \vtheta_n)$ so that $\pi_\vphi \equiv \pi_\vtheta$
        \ENDIF
        \FOR{\text{epoch} $i = 1, 2, \dots,$ \texttt{n\_epoch}}
            \FOR{minibatch $j = 1, 2, \dots,$ \texttt{n\_minibatch}}   
                \STATE Sample minibatch $\sD_j \sim \sD$ 
       \STATE Update $\vphi$ with a step of gradient ascent on loss $\frac{1}{|\sD_j|}\sum_{(\vs,\va)\in\sD_j} \sL(\vphi,\vtheta,\vs,\va,\varepsilon)$ (Eq.~\ref{eq:props})
            \ENDFOR
        \IF{$D_{\KL{}}(\pi_\vtheta || \pi_\vphi) > \delta$}
                \STATE \textbf{return} $\vphi$
        \ENDIF
        \ENDFOR
        \STATE \textbf{return} $\vphi$        
    \end{algorithmic}
\end{algorithm}

\subsection{Cooperative Sampling Error Reduction (CoSER)}
\label{sec:coser}

%


To correct sampling error w.r.t.\@ the joint on-policy distribution, the behavior policy $\pi_\vphi$ must be \textit{centralized}: it must condition on the full joint state $\vs$ and allow arbitrary dependencies across agents’ actions.
%
Moreover, since the PROPS update sets the behavior policy equal to the joint target policy at the start of each behavior update, we also require that $\pi_\vphi$ can be easily initialized to match  $\pi_\vtheta$.
%
Since $\pi_\vphi$ and $\pi_\vtheta$ have different parameterizations (\textit{i.e.}, $\pi_\vtheta$ is decentralized), we cannot simply set $\vphi \gets \vtheta$.
%
To enable this initialization, we introduce a specialized architecture.

We first compute the logits of the joint policy, $\log \pi_\vtheta(\va|\vs) = \sum_{i=1}^n \log \pi_{\vtheta_i}(\va_i|\vs_i)$. We then define a neural network $\Delta_\vphi(\vs): \sS \to \mathbb{R}^{|\sA|}$ that outputs an adjustment to each logit. The behavior policy logits are $\log \pi_\vtheta(\va|\vs) + \Delta_\vphi(\vs)$, and the behavior policy is 
$
\pi_\vphi(\cdot|\vs) = \sigma\left(\log \pi_\vtheta(\cdot|\vs) + \Delta_\vphi(\vs)\right),
$
where $\sigma$ denotes the softmax function.\footnote{We omit the behavior policy's dependence on $\vtheta$ for clarity, as $\vtheta$ is fixed during behavior updates.}
%
To ensure $\pi_\vphi$ and $\pi_\vtheta$ are equal at the start of each update, we set the final layer of $\Delta_\vphi$ to the zero vector so that $\Delta_\vphi(\vs) = \mathbf{0}$ for all $\vs \in \sS$.
Then, we perform minibatch gradient ascent on $\sL(\vphi)$ to adapt $\Delta_\vphi$, \textit{i.e.}, the logit adjustment $\Delta_\vphi(\vs)$ will increase for under-sampled joint actions and decrease for over-sampled ones.
We call this algorithm \textbf{Co}operative \textbf{S}ampling \textbf{E}rror \textbf{R}eduction (CoSER) and provide pseudocode in Algorithm~\ref{alg:maprops}.

\subsection{CoSER Convergence}

In this section, we extend the convergence analysis of \citet{zhong2022robust} from the single-agent setting to the multi-agent setting.\footnote{We include this theoretical analysis for completeness and to show how the analysis of \citet{zhong2022robust} for the single-agent setting maps to the multi-agent setting. While the second statement of Theorem~\ref{thm:maprops_kl} establishes a novel MARL result, we do not consider this analysis a core contribution of our work.}
%
Due to space constraints, we defer all proofs to Appendix~\ref{app:theory}. 
First, we show that CoSER converges to the expected on-policy joint state visitation distribution.
Next,
we show that under CoSER, the empirical joint policy $\pi_\sD(\cdot|\vs)$ converges to the joint on-policy distribution $\pi(\cdot|\vs)$ faster than on-policy sampling. 
We also prove analogous results for each agent individually.
Our results use the following assumption:
\begin{assumption}
\label{assump:maprops_most_under_sampled}
CoSER uses a learning rate of $\alpha \to \infty$
and the behavior policy is parameterized as a softmax function $\pi_\phi(a | s) \propto e^{\vphi_{s,a}}$, where each state-action pair ($\vs, \va$) has a parameter $\vphi_{s,a}$. This assumption implies that CoSER always selects the most under-sampled joint action in each state.
\end{assumption}
We elaborate on why Assumption~\ref{assump:maprops_most_under_sampled} implies that CoSER always selects the most under-sampled joint action in Appendix~\ref{app:theory}.
Theorem~\ref{thm:maprops_d} below shows that the empirical state visitation distributions converge to the true state visitation distributions under CoSER.
We let $d_{\sD_m}$ and $\pi_{\sD_m}$ denote the empirical joint state visitation distribution and empirical joint policy after $m$ state-action pairs have been taken, respectively.
In particular, $d_{\sD_m}(s)$ is the fraction of $m$ joint states that are $s$, and $\pi_{\sD_m}(a|s)$ is the fraction of times that joint action $a$ was observed in joint state $s$.
We use a subscript $i$ to denote analogous quantities for agent $i$.

\begin{theorem}
\label{thm:maprops_d}
%
Assume that $\sS$ and $\sA$ are finite.
Under CoSER with Assumption~\ref{assump:maprops_most_under_sampled}, the empirical joint state visitation distribution $d_{\sD_m}$ converges to the joint state distribution of $\pi$, $d_\pi$, with probability $1$:
%
\[
    \forall s, \lim_{m\rightarrow \infty} d_{\sD_m}(s) = d_\pi(s).
\]
Moreover, the empirical state visitation distribution for each agent $i$, $d_{\sD_m,i}$, converges to the state visitation distribution of $\pi_i$, $d_{\pi_i}$, with probability $1$:
\[
    \forall s_i, \lim_{m\rightarrow \infty} d_{\sD_m,i}(s_i) = d_{\pi_i}(s_i) \quad \forall i\in \sI.
\]
\end{theorem}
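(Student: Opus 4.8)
The plan is to reduce the multi-agent claim to the single-agent convergence result of \citet{zhong2022robust}. The key observation is that MA-PROPS runs a \emph{centralized} behavior policy over the joint action space $\sA$ conditioned on the joint state $\vs$, and under Assumption~\ref{assump:maprops_most_under_sampled} it deterministically selects the most under-sampled joint action in each joint state. This is exactly the single-agent scheme of \citet{zhong2022robust} applied to the ``flattened'' MDP whose states are joint states $\vs \in \sS$, whose actions are joint actions $\va \in \sA$, whose transition kernel is $p$, and whose target policy is $\pi(\va|\vs) = \prod_{i=1}^n \pi_i(\va_i|\vs_i)$. Since $\sS$ and $\sA$ are finite, this flattened MDP meets the hypotheses of their theorem, so I would invoke it directly to conclude $\lim_{m\to\infty} d_{\sD_m}(\vs) = d_\pi(\vs)$ with probability $1$ for every $\vs$.

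For completeness I would also recall the two ingredients underlying that single-agent result, specialized to joint quantities. First, an action-balancing lemma: at any joint state $\vs$ visited infinitely often, always selecting the most under-sampled joint action forces the empirical conditional $\pi_{\sD_m}(\cdot|\vs)$ to converge to $\pi(\cdot|\vs)$, since whenever the empirical frequency of a joint action falls below its target probability that action is chosen, preventing any persistent deficit. Second, an induction over the finite horizon to convert convergence of conditional action frequencies into convergence of state visitation: the initial-state distribution is fixed and its empirical version converges by the strong law of large numbers (the adaptive rule affects only actions, not episode starts), and if the visitation of states at depth $t$ converges and the conditional action distributions there converge, then convergence of the depth-$(t+1)$ visitation follows because $p$ is fixed and $\sS,\sA$ are finite.

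The per-agent statement then follows by marginalization, interpreting $d_{\pi_i}$ as the marginal of $d_\pi$ onto agent $i$'s state component. Writing $\vs = (\vs_i, \vs_{-i})$ and using that there are finitely many states, I would set $d_{\sD_m,i}(\vs_i) = \sum_{\vs_{-i}} d_{\sD_m}(\vs_i,\vs_{-i})$ and $d_{\pi_i}(\vs_i) = \sum_{\vs_{-i}} d_\pi(\vs_i,\vs_{-i})$; passing the limit through the finite sum yields $\lim_{m\to\infty} d_{\sD_m,i}(\vs_i) = d_{\pi_i}(\vs_i)$ for every agent $i$ and state $\vs_i$.

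The main obstacle is the coupling between the two ingredients in the second paragraph: the action-balancing lemma only guarantees convergence at states visited infinitely often, yet whether a state is visited infinitely often is itself governed by the (still-converging) visitation distribution. I would break this circularity with the horizon induction, which propagates guaranteed infinite visitation forward from the fixed initial-state distribution through all states with positive target visitation. States with $d_\pi(\vs)=0$ need separate treatment, since the greedy rule is vacuous there; I would argue that such states are visited only finitely often almost surely and hence contribute $0$ to both sides of the limit.
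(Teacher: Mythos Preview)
Your proposal is correct and follows essentially the same route as the paper: flatten the multi-agent problem into a single-agent MDP over joint states and joint actions, invoke the single-agent convergence result (the paper cites Proposition~1 of \citet{corrado_props_2023}, you cite \citet{zhong2022robust}, but these are the same lineage), and then obtain the per-agent claim by marginalizing the finite sum. Your second and fourth paragraphs supply more of the internal mechanics (action-balancing plus horizon induction, and the treatment of $d_\pi(\vs)=0$ states) than the paper's one-line citation, but the skeleton is identical.
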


%
%
In Theorem~\ref{thm:maprops_kl} below, the first result shows that CoSER decreases joint sampling error faster than on-policy sampling. We contrast this result with Example 2, which shows that MA-PROPS does not necessarily reduce joint sampling error and therefore lacks this guarantee.
The second result of this theorem establishes a novel multi-agent result: this accelerated reduction in joint sampling error also guarantees that sampling error w.r.t.\@ each agent decreases at the same accelerated rate. 
%
Because sub-optimal behavior by even a single agent (which can arise from sampling error) can cause all agents to converge to a sub-optimal joint policy~\citep{zhao2023optimistic}, reducing sampling error w.r.t.\@ each agent \textit{in addition to} reducing joint sampling error is important for reliable convergence.
\begin{theorem}
\label{thm:maprops_kl}
Let $s$ be a particular state that is visited $m$ times during data collection and assume $|\mathcal{A}| \geq 2$. Then under Assumption~\ref{assump:maprops_most_under_sampled},
\begin{enumerate}
    \item $D_{\mathrm{KL}}(\pi_{\mathcal{D}}(\cdot | s) \| \pi(\cdot | s)) = O_p\left(\sfrac{1}{m^2}\right)$ under CoSER while  
$D_{\mathrm{KL}}(\pi_{\mathcal{D}}(\cdot | s) \| \pi(\cdot | s)) = O_p\left(\sfrac{1}{m}\right)$ \\under on-policy sampling.
    \item $D_{\mathrm{KL}}(\pi_{\mathcal{D}, i}(\cdot | s) \| \pi_i(\cdot | s)) = O_p\left(\sfrac{1}{m^2}\right)$ under CoSER while  
$D_{\mathrm{KL}}(\pi_{\mathcal{D}, i}(\cdot | s) \| \pi_i(\cdot | s)) = O_p\left(\sfrac{1}{m}\right)$ \\under on-policy sampling.
\end{enumerate}
where $O_p$ denotes stochastic boundedness. 
\end{theorem}


\begin{figure*}
\begin{subfigure}{0.17\linewidth}
    \centering
    \includegraphics[width=\linewidth]{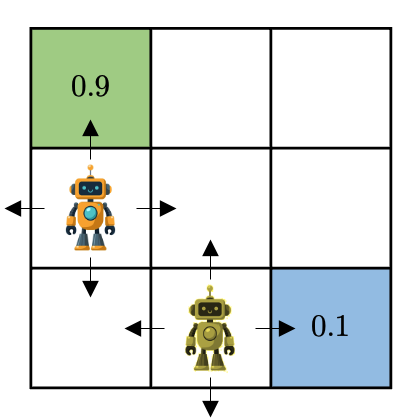}
    \caption{GridWorld}
    \label{fig:gridworld}
\end{subfigure}
\hfill
\begin{subfigure}{0.15\linewidth}
    \centering
    \raisebox{0.5em}{\includegraphics[width=\linewidth]{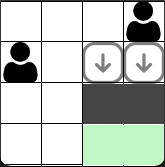}}
    \caption{BoulderPush}
    \label{fig:bpush}
\end{subfigure}
\hfill
\begin{subfigure}{0.15\linewidth}
    \centering
    \raisebox{0.5em}{\includegraphics[width=\linewidth]{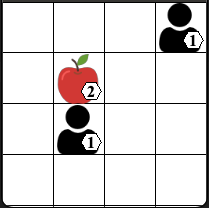}}
    \caption{LBF}
    \label{fig:lbf}
\end{subfigure}
\hfill
\begin{subfigure}{0.17\linewidth}
    \def\arraystretch{1.5}
    \centering
    \begin{adjustbox}{width=\linewidth}
    \begin{tabular}{cC|CCC}
        \multicolumn{1}{c}{} & & \multicolumn{3}{c}{Agent 2} \\
        && A & B & C \\
        \hline
        \multirow{3}{*}{\rotatebox{90}{Agent 1}}
        &A & 11^\star & -3 & 0 \\
        &B & -3 & 7^\dagger & 0 \\
        &C & 0 & 3 & 2 \\
    \end{tabular}
    \end{adjustbox}
    \caption{Climbing game}
    \label{fig:climbing}
\end{subfigure}
\hfill
\begin{subfigure}{0.17\linewidth}
    \def\arraystretch{1.5}
    \centering
    \begin{adjustbox}{width=\linewidth}
    \begin{tabular}{cC|CCC}
        \multicolumn{1}{c}{} & & \multicolumn{3}{c}{Agent 2} \\
        && A & B & C \\
        \hline
        \multirow{3}{*}{\rotatebox{90}{Agent 1}}
        &A & -7 & 0 & 10^\star\\
        &B & 0 & 2^\dagger & 0 \\
        &C & 10^\star & 0 & -7 \\
    \end{tabular}
    \end{adjustbox}
    \caption{Penalty game}
    \label{fig:penalty}
\end{subfigure}
\caption{A subset of games used in our experiments. In matrix games, $\star$ denotes optimal equilibria, and $\dagger$ denotes suboptimal equilibria. All $2\times2$  matrix games are listed in Appendix~\ref{app:games}.}
\vspace{-1em}
\end{figure*}

\section{Experiments}
\label{sec:experiments}



We design experiments to test the following hypotheses:

\begin{enumerate}
    \item[\textbf{H1:}] CoSER achieves lower joint sampling error than MA-PROPS and on-policy sampling after collecting a fixed number of samples.
    \item[\textbf{H2:}] CoSER yields more reliable on-policy policy gradient learning than MA-PROPS and on-policy sampling, increasing the fraction of training runs that converge optimally. 
\end{enumerate}


\textbf{Evaluation metrics.}
To evaluate \textbf{H1}, we use two sampling error metrics. In tasks with small state-action dimensionality, we use the total variation (TV) distance between the empirical joint state-action visitation $d_\sD(\vs,\va)$ distribution, denoting the proportion of times $(\vs,\va)$ appears in buffer $\sD$, and the true joint state-action visitation distribution $d_\pitheta(\vs,\va)$ under $\pi_\vtheta$: $d_\text{TV}(d_\sD, d_{\pi_\vtheta}) =\frac{1}{2}\sum_{(\vs, \va)\in \sD} |d_\sD(\vs,\va) - d_\pitheta(\vs,\va)|$. 
In tasks where it is costly to compute $d_{\pi_\vtheta}$, we follow \citet{corrado_props_2023} and \citet{zhong2022robust} and compute sampling error as the KL divergence $D_{\KL{}}(\pi_\sD || \pi_\vtheta)$, where $\pi_\sD(\va|\vs)$is the empirical policy denoting the fraction of times $\va$ was sampled at state $\vs$ in $\sD$.
We estimate $\pi_\sD$ as the maximum likelihood policy under $\sD$ (see Appendix~\ref{app:sampling_error} for details).
To evaluate \textbf{H2}, we track the \textit{success rate}, the fraction of evaluation episodes in which the agents solve the task when sampling actions from the \textit{decentralized} target policies.
To ensure a fair comparison, we tune CoSER and MA-PROPS over the same hyperparameters and report results for the best setting (see Appendix~\ref{app:hyperparameters} for details).
In all figures, we plot the mean success rate or mean sampling error with 95\%  bootstrap confidence intervals.
%

\textbf{Multi-agent games.} We consider several multi-agent games from prior works: Level-Based Foraging (LBF)~\citep{papoudakis2020benchmarking, christianos2020shared}, BoulderPush~\citep{christianos2022pareto}, the $3\times 3$ Climbing and Penalty matrix games, and all 21 structurally distinct $2\times 2$ no-conflict matrix games (listed in Fig.~\ref{fig:2x2_games_all} of Appendix~\ref{app:games})~\citep{albrecht2024multi}. 
We additionally create a custom 3x3 GridWorld (Fig.~\ref{fig:gridworld}).
In all games, each agent has an optimal behavior, but the maximal return is achieved only when \textit{all} agents act optimally; if some do so while others do not, all agents incur a penalty.
In BoulderPush, agents must push a boulder to a goal, and both agents receive a penalty if only one agent attempts to push the boulder.
Similarly, in LBF, agents must forage a food item together, and both agents receive a penalty if only one agent attempts to forage.
In GridWorld, agents receive reward $0.9$ if they both simultaneously reach the top-left cell (green) and reward $-0.1$ if only one does. Agents receive reward $0.1$ if either agents reaches the bottom-right cell (blue). 
%
%
In all games except $2\times2$ matrix games 1–16, there exists at least one suboptimal equilibrium in which agents avoid acting optimally to avoid the risk of being penalized.\footnote{\citet{albrecht2024multi} call this type of equilibrium \textit{risk-dominant} because it has lower risk than the optimal equilibrium. Other works refer to it as \textit{shadowed equilibrium}}
We provide additional game details in Appendix~\ref{app:games}.

\textbf{Reward rescaling.} 
If the expected policy gradient points toward a sub-optimal equilibrium, agents will converge sub-optimally even if we have no joint sampling error.
To isolate the effect of joint sampling error on convergence, we focus on games where \textit{the expected policy gradient of each agent points toward optimality.} 
%
%
However, in most of these games, the expected policy gradient points away from optimality at initialization.\footnote{\citet{christianos2022pareto} show that on-policy policy gradient algorithms like MAPPO and MAA2C consistently converge sub-optimally in the same tasks we consider.}  
%
The probability of all agents cooperating is very small at initialization, so the expected return of any agent $i$ acting optimally (\textit{e.g.} pushing the boulder) has lower expected return than acting sub-optimally (\textit{e.g.} avoiding the boulder)~\citep{christianos2022pareto,zhao2023optimistic}.
%
Thus, we rescale the rewards in some games so that the expected policy gradient under uniformly initialized policies encourages cooperation. 
%
We detail these modifications in Appendix~\ref{app:games}. 
%


\begin{figure*}
\begin{subfigure}{\linewidth}
    \centering
    \includegraphics[width=\linewidth]{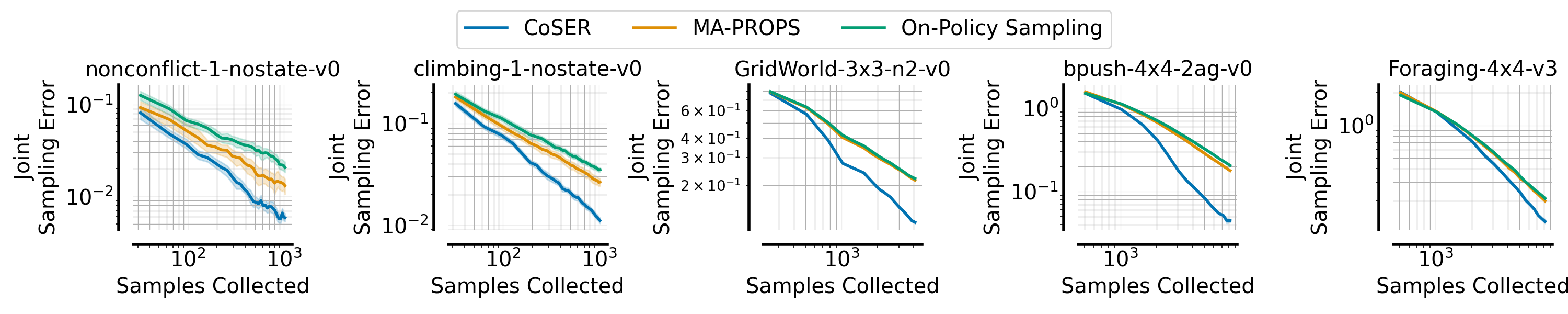}
    \caption{Sampling error w.r.t.\@ the joint policy.}
    \label{fig:se_joint}
    \vspace{1em}
    \begin{subfigure}{\linewidth}
    \centering
    \includegraphics[width=\linewidth]{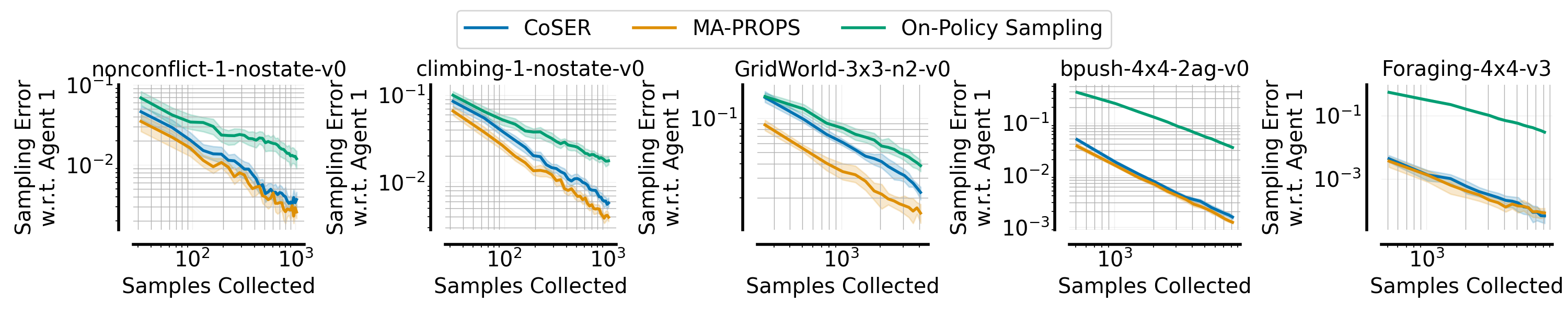}
    \caption{Sampling error w.r.t.\@ Agent 1's policy.}
    \label{fig:se_0}
    \end{subfigure}
\end{subfigure}
    \caption{
    Mean sampling error over 10 seeds with 95\% boostrap confidence intervals. {\color{MidnightBlue}\underline{Takeaway:} CoSER reduces joint sampling error with fewer samples than MA-PROPS and on-policy sampling even though MA-PROPS often reduces sampling error w.r.t.\@ Agent 1 faster than CoSER.}
    }
    \vspace{-1em}
\end{figure*}

\subsection{Reducing Sampling Error w.r.t.\@ a Fixed Target Policy}

We first study how quickly CoSER decreases sampling error when each agent's policy is a fixed. 
%
%
We provide two baselines: on-policy sampling and MA-PROPS.
We randomly initialize each agent's policy and collect a fixed number of samples with each sampling method.
%
%
For matrix games and GridWorld, we compute sampling error as $d_\text{TV}(d_\sD, d_{\pi_{\vtheta}})$. For all other tasks, we use $D_\text{KL}(\pi_\sD||\pi_\vtheta)$.
Since all matrix games have the same dynamics, we focus on $2\times2$ game 1 and the $3\times3$ Climbing game. 
%
%
As shown in Fig.~\ref{fig:se_joint}, CoSER consistently achieves lower joint sampling error than both MA-PROPS and on-policy sampling. 
%
MA-PROPS yields only marginal improvements over on-policy sampling in most tasks.
Fig.~\ref{fig:se_0} shows that CoSER achieves the lowest joint sampling error even though MA-PROPS decreases sampling error w.r.t.\@ each agent more than CoSER, highlighting how reducing sampling error w.r.t.\@ each agent does not guarantee reduced joint sampling error. These results support \textbf{H1}.

\subsection{Reducing Sampling Error During Reinforcement Learning}

We now examine how CoSER affects the reliability of on-policy policy gradient algorithms. We train agents using CoSER, MA-PROPS, and on-policy sampling and track their success rate and joint sampling error over training.
Our main experiments use MAPPO~\citep{yu2022surprising} to update target policies. We report additional results using IPPO~\citep{de2020independent} in Appendix~\ref{app:experiments}. 
We compute sampling error as $d_\text{TV}(d_\sD, d_{\pi_\vtheta})$ for matrix games and $D_{\KL{}}(\pi_\sD || \pi_\vtheta)$ for all other tasks.
Since CoSER and MA-PROPS generate different target policy sequences during training, we compute on-policy sampling error separately for each method by filling a second buffer with samples from the corresponding target policies.

\begin{figure*}
    \centering
    \includegraphics[width=0.95\linewidth]{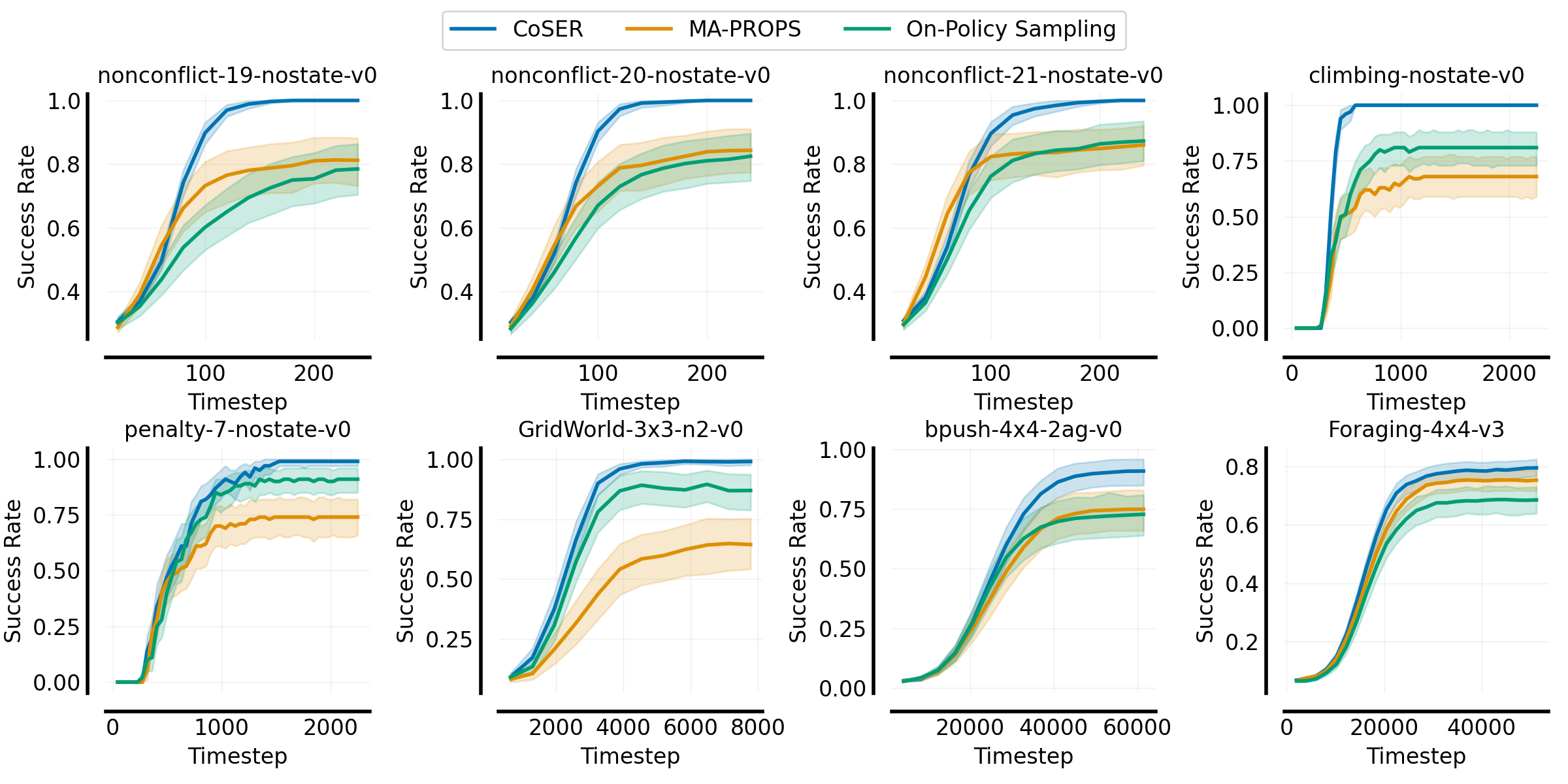}
    \caption{MAPPO mean success rates over 100 seeds with 95\% boostrap confidence intervals. {\color{MidnightBlue}\underline{Takeaway:} CoSER increases success rate by 10-20\% over on-policy sampling and MA-PROPS. MA-PROPS only increases success rate in LBF.}}
    \label{fig:success_rate_all}
    \vspace{-1em}
\end{figure*}

\textbf{GridWorld, BoulderPush, LBF.} 
As shown in Fig.~\ref{fig:success_rate_all}, CoSER has a higher probability of converging optimally in GridWorld, BoulderPush, and LBF compared to on-policy sampling and MA-PROPS, supporting \textbf{H2}.
At convergence, CoSER improves success rate over on-policy sampling by 15 percentage points in GridWorld, 19 in BoulderPush, and 10 in LBF.
In contrast, MA-PROPS provides only marginal benefit in LBF, no improvement in BoulderPush, and a 20-point drop in GridWorld.
Fig.~\ref{fig:gw_se},~\ref{fig:bpush_se}, and~\ref{fig:lbf_se} show joint sampling error during training for GridWorld, BoulderPush, and LBF and clarifies why CoSER outperforms MA-PROPS: CoSER decreases joint sampling error more than MA-PROPS. In fact, MA-PROPS does not improve over on-policy sampling at all in LBF. These results support \textbf{H1}.
%

\textbf{Matrix Games.} We show training curves for $2\times2$ games 19-21 and both $3\times3$ games in Fig.~\ref{fig:success_rate_all}.
In these games, CoSER is the only algorithm that consistently converges to the optimal policy.
We provide training curves for the remaining $2\times2$ games in Appendix~\ref{app:experiments}.
In games 1–18, MA-PROPS and on-policy sampling achieve success rates of at least 95\% of runs. This result is consistent with findings by \citet{christianos2022pareto}, who observed that independent on-policy policy gradient algorithms like MAA2C only struggle in games 19–21. 
Nevertheless, CoSER improves reliability even in these easier games, achieving perfect success rates in all games and also converging faster in games 6, 11, 12, 16, and 17.
These results support \textbf{H2}.

Fig.~\ref{fig:2x2_21_se},~\ref{fig:climbing_se},and~\ref{fig:penalty_se} show joint sampling error curves for $2\times2$ matrix game 21 and the $3\times3$ Climbing and Penalty games. Due to space constraints, we provide joint sampling error curves the remaining $2\times 2$ matrix games in Fig.~\ref{fig:2x2_se_joint_all_ippo_maprops} of Appendix~\ref{app:experiments}.
%
%
%
In all matrix games, results are qualitatively similar: CoSER decreases joint sampling error by a large margin before learning converges, while MA-PROPS generally does not decrease joint sampling error. 
Both CoSER and MA-PROPS reduce sampling error w.r.t.\@ Agent 1, again highlighting how reducing sampling error w.r.t.\@ each agent may not reduce joint sampling error.
%
These results support \textbf{H1}.



\section{Limitations}
\label{sec:limitations}

The core goal of this paper is to characterize a subtle failure mode of independent on-policy policy gradient algorithms---joint sampling error---and then demonstrate that reducing joint sampling error via adaptive action sampling can improve the reliability of these algorithms.
In this section, we discuss limitations of our proposed approach (CoSER) and outline directions for future work. 

\textbf{Scaling to many agents.}
Since the joint action space grows exponentially with the number of agents, it may be difficult to learn a centralized behavior policy for tasks with many agents. A potential solution is to use deep coordination graphs (DCG)~\citep{bohmer2020deep} to factor the behavior policy into locally conditioned components, trading off representational capacity for scalability.

\textbf{Batch size.} 
CoSER increases the probability of under-sampled joint actions at states $\vs$ in $\sD$ so that sampling error is reduced \textit{upon the agents' next visit to states with features similar to the features of $\vs$}.
If $\sD$ is small, then revisits to states with features similar to $\vs$ are rare, and CoSER will perform similarly to on-policy sampling.
%
%
%
This could be addressed by letting $\sD$ retain historic data between target policy updates. 
\citet{corrado_props_2023} showed that PROPS can leverage historic data by collecting additional data so that the aggregate distribution in $\sD$ is close to on-policy.
Since CoSER shares the same behavior update as PROPS, it can likely benefit from the same data reuse strategies shown to be effective in single-agent settings.

\textbf{Discrete Actions.}
Our experiments focus on discrete action tasks, so the centralized behavior policy we describe in Section~\ref{sec:coser} assumes discrete actions. However, this architecture extends in principle to continuous action settings. We detail this extension in Appendix~\ref{app:continuous_action}.

\begin{figure*}
    \centering
    \begin{subfigure}{0.47\linewidth}
        \centering
        \includegraphics[width=\linewidth]{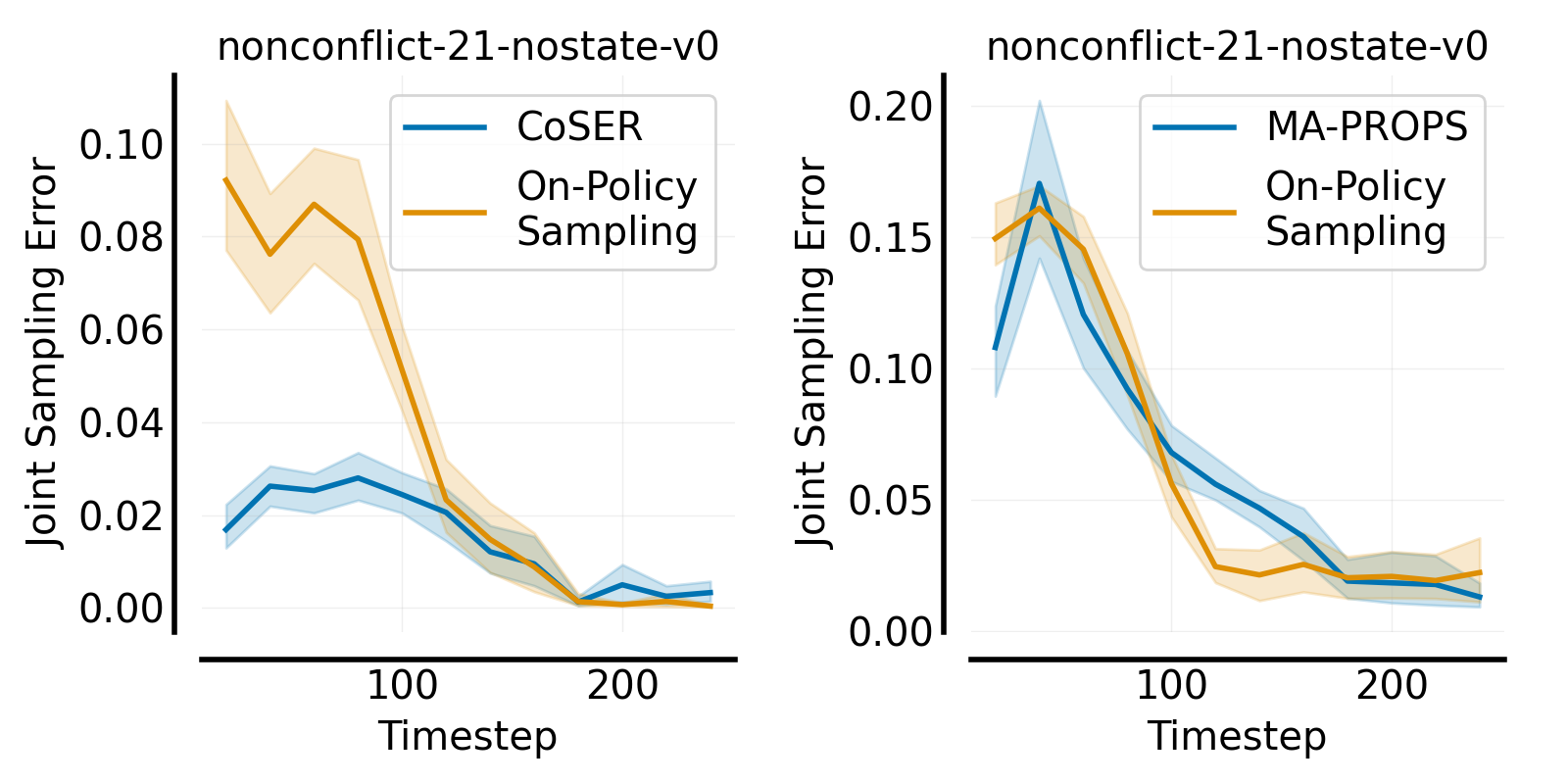}
        \caption{$2\times2$ Matrix Game 21}
        \label{fig:2x2_21_se}
    \end{subfigure}
    \begin{subfigure}{0.47\linewidth}
        \centering
        \includegraphics[width=\linewidth]{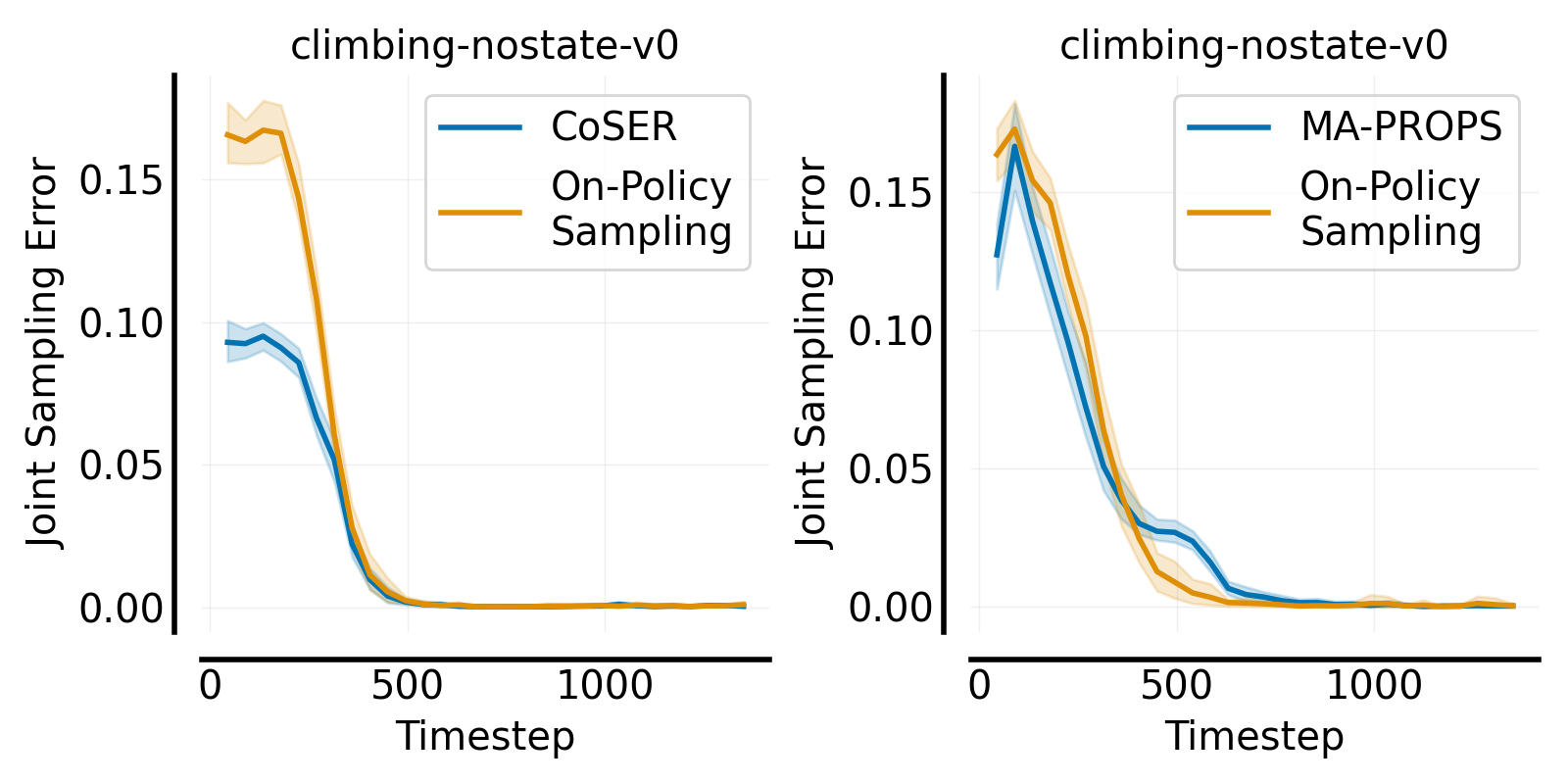}
        \caption{Climbing Game}
        \label{fig:climbing_se}
    \end{subfigure}
    \begin{subfigure}{0.47\linewidth}
        \centering
        \includegraphics[width=\linewidth]{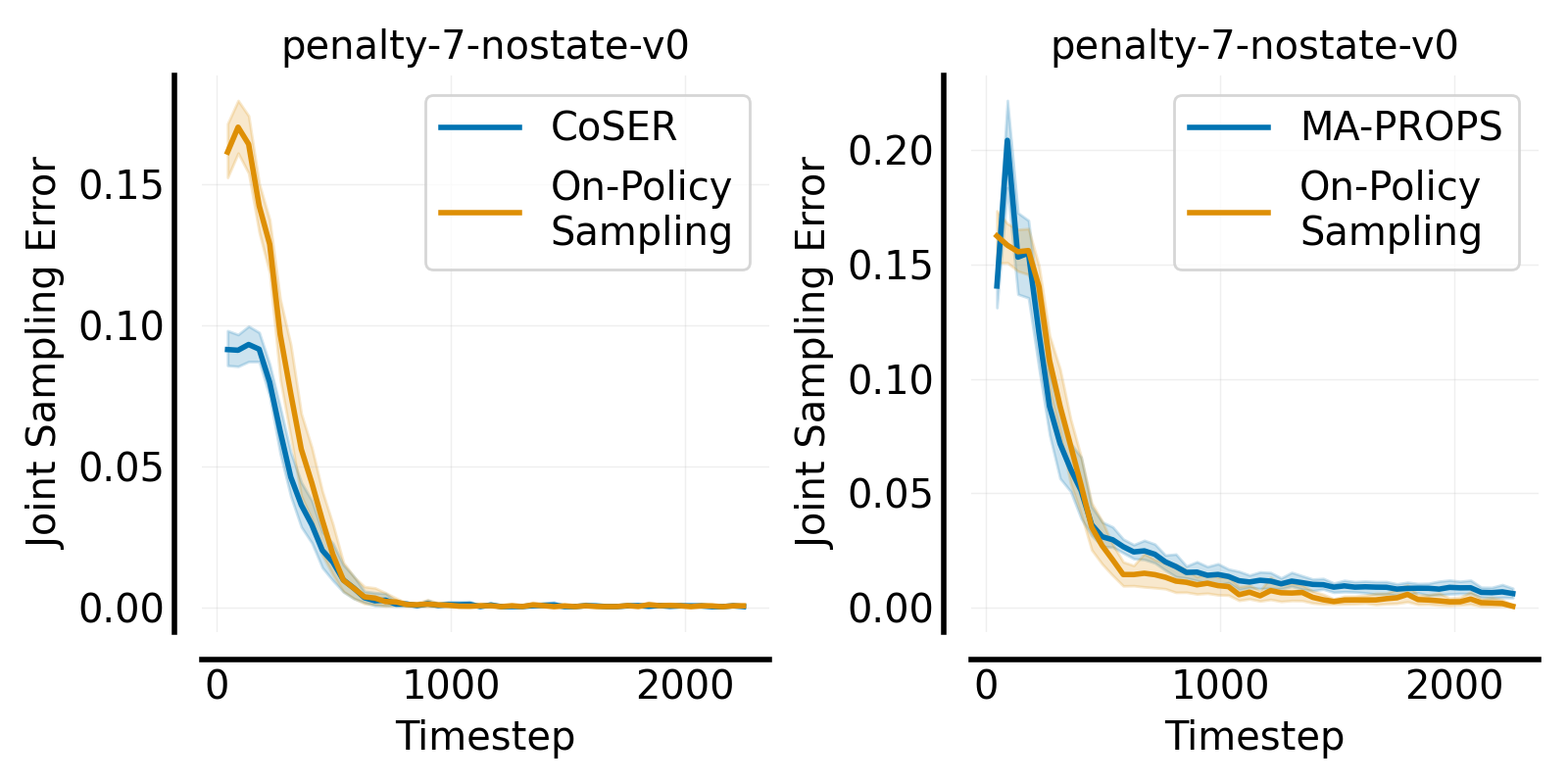}
        \caption{Penalty Game}
        \label{fig:penalty_se}
    \end{subfigure}
    \begin{subfigure}{0.47\linewidth}
        \centering
        \includegraphics[width=\linewidth]{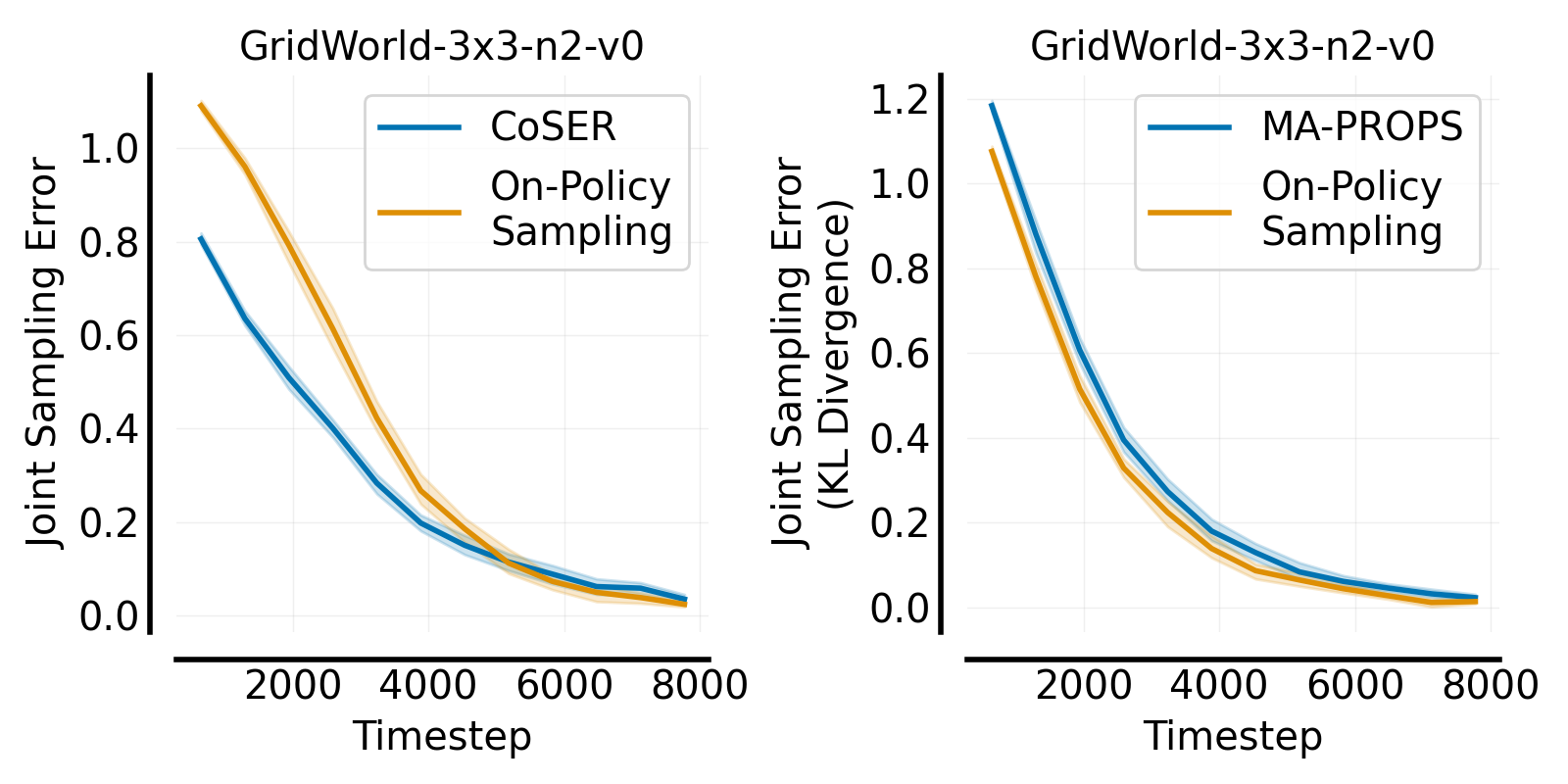}
        \caption{GridWorld}
        \label{fig:gw_se}
    \end{subfigure}
    \centering
    \begin{subfigure}{0.47\linewidth}
        \centering
        \includegraphics[width=\linewidth]{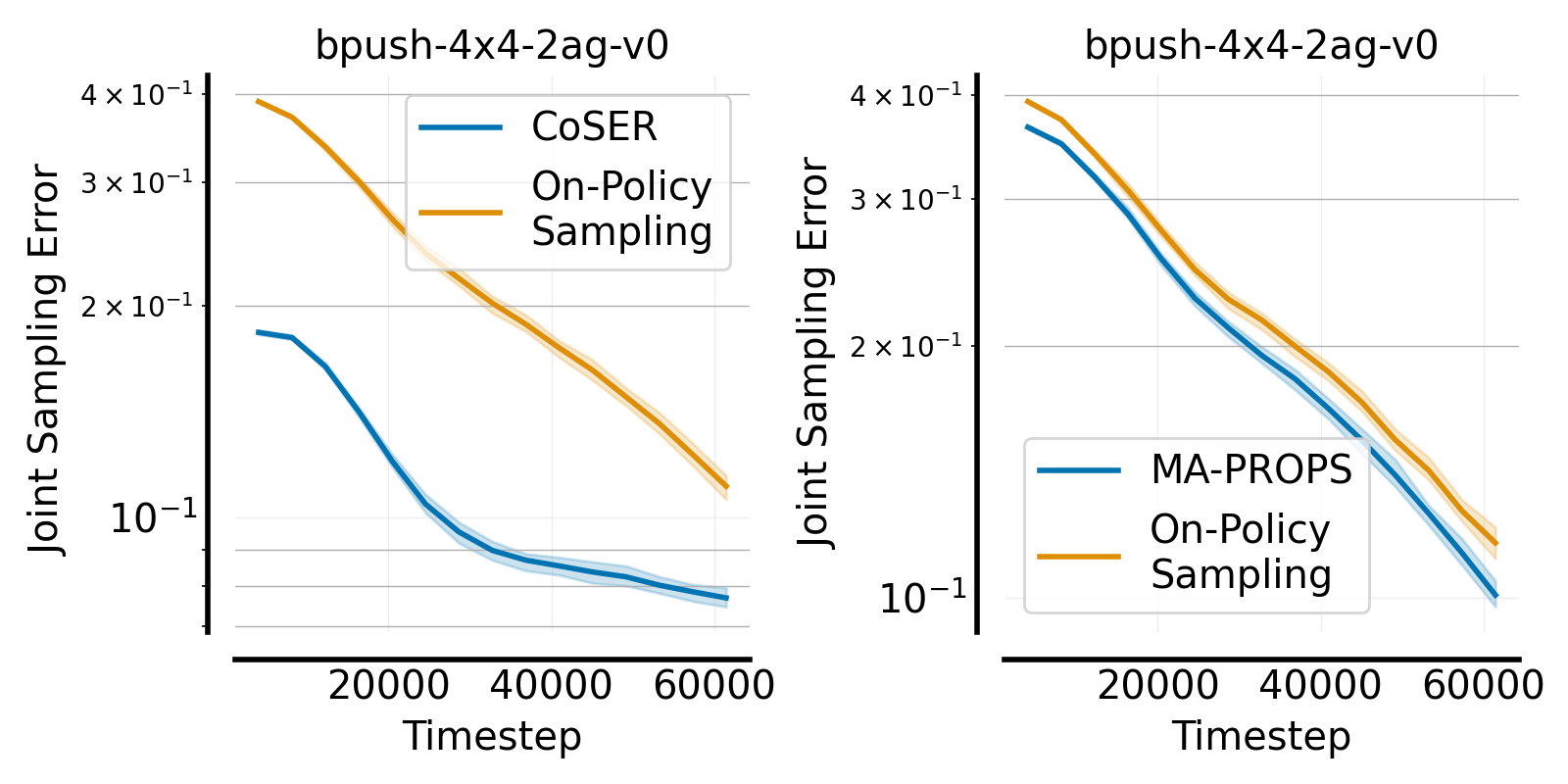}
        \caption{BoulderPush}
        \label{fig:bpush_se}
    \end{subfigure}
        \begin{subfigure}{0.47\linewidth}
        \centering
        \includegraphics[width=\linewidth]{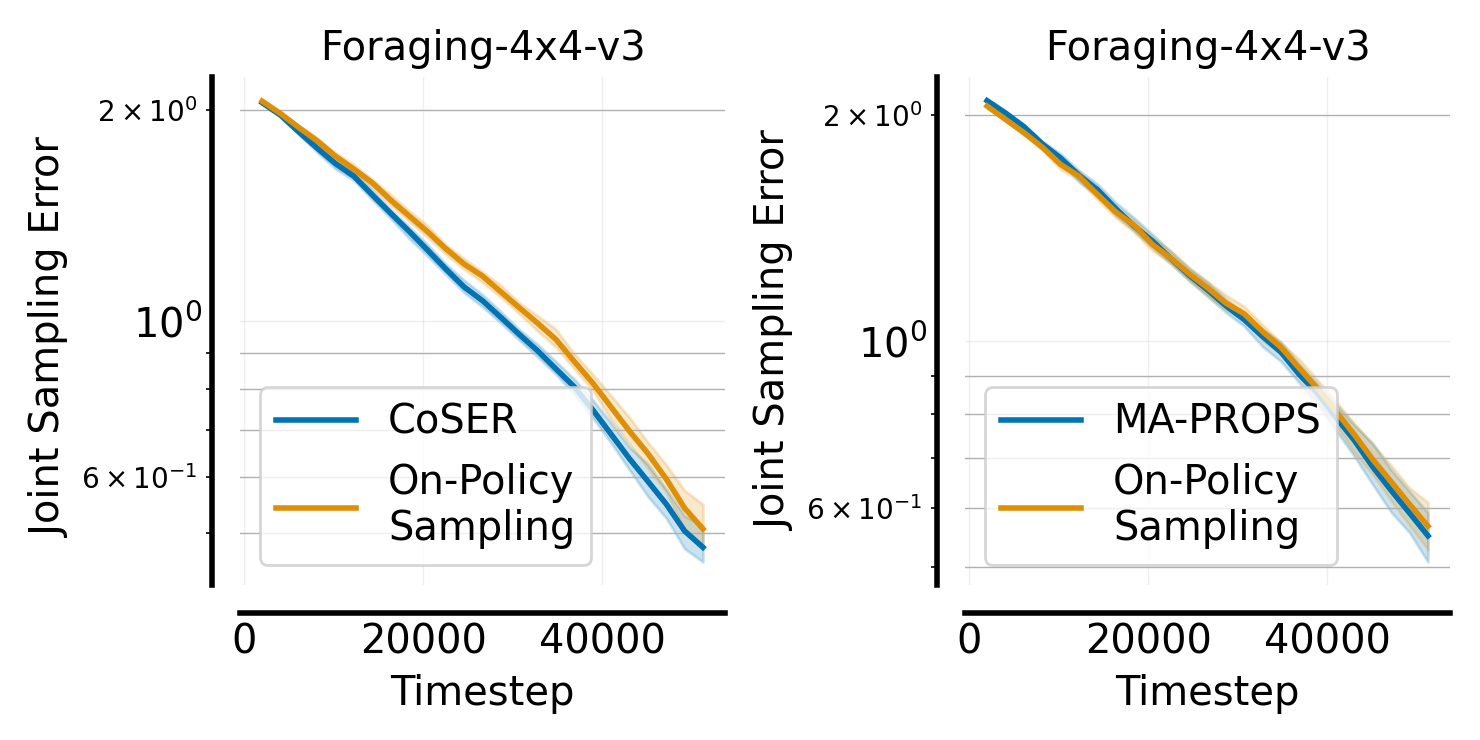}
        \caption{LBF}
        \label{fig:lbf_se}
    \end{subfigure}
    \caption{Mean joint sampling error over 100 seeds with 95\% boostrap confidence intervals. {\color{MidnightBlue}\underline{Takeaway:} CoSER reduces joint sampling error more than MA-PROPS.}}
    \label{fig:rl_se}
\end{figure*}

\section{Conclusion}

In this paper, we first identify a subtle failure mode of independent on-policy policy gradient algorithms. Stochasticity in action sampling can cause the joint distribution of collected data to deviate from the expected joint on-policy distribution, and this \textit{sampling error} can cause agents to converge to sub-optimal solutions---\textit{even when the expected policy gradients align with optimal behavior}.
Given this failure mode, we asked: Can reducing sampling error in the joint data distribution via coordinated action selection lead to more reliable convergence of independent on-policy policy gradient algorithms?
Toward answering this question, we introduce \textbf{Co}operative \textbf{S}ampling \textbf{E}rror \textbf{R}eduction (CoSER), an action sampling algorithm that adaptively corrects joint sampling error during multi-agent on-policy data collection.
CoSER follows the Centralized Training with Decentralized Execution (CTDE) paradigm:
agents collect data using a centralized behavior policy that we continually adapt to increase the probability of under-sampled joint actions, and this data is then used to train decentralized policies for deployment.
%
Empirically, CoSER reduces sampling error in the joint on-policy distribution more efficiently than standard on-policy sampling, and increases the fraction of training runs that converge optimally.








\bibliography{main}
\bibliographystyle{rlj}

\beginSupplementaryMaterials

\section{Convergence Analysis}
\label{app:theory}

In this section, we present the proof of Theorems~\ref{thm:maprops_d} and~\ref{thm:maprops_kl} from the main paper. 
The proofs rest on Assumption 2 and Theorem 2 by \citet{zhong2022robust} as well as Proposition 1 by \citet{corrado_props_2023}, which we first restate below for completeness.
Recall that ROS is the  algorithm introduced by~\citet{zhong2022robust} to reduce sampling error w.r.t.\@ a single policy in the single-agent setting.
\begin{assumption}[Assumption 2 in~\citet{zhong2022robust}]
ROS uses a step-size of $\alpha \to \infty$
and the behavior policy is parameterized as a softmax function, i.e., $\pi_\phi(a | s) \propto e^{\vphi_{s,a}}$, where for each state $\vs$ and action $\va$, we have a parameter $\vphi_{s,a}$. This assumption implies that ROS always selects the most under-sampled joint action in each state.
\label{assump:ros_most_under_sampled}
\end{assumption}
\begin{theorem}[Theorem 1 in~\citet{zhong2022robust}]
\label{thm:ros_kl}
Let $s$ be a particular state that is visited $m$ times during data collection and assume that $|\mathcal{A}| \geq 2$. Under Assumption~\ref{assump:ros_most_under_sampled}, 
$
D_{\mathrm{KL}}(\pi_{\mathcal{D}}(\cdot | s) \| \pi(\cdot | s)) = O_p\left(\frac{1}{m^2}\right)$ under ROS sampling while  
$D_{\mathrm{KL}}(\pi_{\mathcal{D}}(\cdot | s) \| \pi(\cdot | s)) = O_p\left(\frac{1}{m}\right)$ under on-policy sampling
where $O_p$ denotes stochastic boundedness.
\end{theorem}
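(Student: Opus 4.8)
The plan is to reduce the claim to a one-dimensional counting problem at the fixed state $s$ and to control, under each sampling scheme, how far the empirical action counts can drift from their target values. Write $N_a(t)$ for the number of times action $a$ has been selected among the first $t$ visits to $s$, so that $\pi_{\mathcal{D}}(a\mid s) = N_a(m)/m$ after all $m$ visits. Because the softmax parameterization of Assumption~\ref{assump:ros_most_under_sampled} assigns a separate parameter $\vphi_{s,a}$ to each state--action pair, the behavior at $s$ depends only on the counts accumulated at $s$; this decouples $s$ from the rest of the chain and lets me analyze the $m$ visits in isolation. The entire argument then rests on converting a bound on the deviation $|N_a(m) - m\,\pi(a\mid s)|$ into a bound on $D_{\mathrm{KL}}(\pi_{\mathcal{D}}(\cdot\mid s)\,\|\,\pi(\cdot\mid s))$, for which I will use the standard inequality $D_{\mathrm{KL}}(P\|Q) \le \chi^2(P\|Q) = \sum_a (P(a)-Q(a))^2/Q(a)$ (a consequence of $\log x \le x-1$). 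Since a softmax target policy has full support, $\min_a \pi(a\mid s) > 0$ and $\sum_a 1/\pi(a\mid s)$ is a finite constant, so a per-coordinate deviation of order $\delta$ yields $D_{\mathrm{KL}} = O(\delta^2)$.

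For ROS, the first step is a deterministic bounded-deviation lemma. Under Assumption~\ref{assump:ros_most_under_sampled} the algorithm selects, at each visit, the action maximizing the deficit $\pi(a\mid s) - N_a(t)/t$, equivalently minimizing the surplus $g_a(t) := N_a(t) - t\,\pi(a\mid s)$, with ties broken arbitrarily. I will show by induction that $g_a(t) \le 1$ for all $a$ and $t$: a non-selected action satisfies $g_a(t+1) = g_a(t) - \pi(a\mid s) \le g_a(t)$, while the selected action $a^\ast$ is a minimizer and hence obeys $g_{a^\ast}(t) \le 0$ (the minimum is at most the average, which is $0$ because $\sum_a g_a(t) = 0$ and $|\mathcal{A}| \ge 2$), giving $g_{a^\ast}(t+1) = g_{a^\ast}(t) + 1 - \pi(a^\ast\mid s) \le 1$. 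Combining $g_a(t)\le 1$ with $\sum_a g_a(t) = 0$ yields the matching lower bound $g_a(t) \ge -(|\mathcal{A}|-1)$, so $|N_a(m) - m\,\pi(a\mid s)| \le |\mathcal{A}|-1$ deterministically. Hence each coordinate deviates by $O(1/m)$, and the chi-squared bound gives $D_{\mathrm{KL}}(\pi_{\mathcal{D}}(\cdot\mid s)\,\|\,\pi(\cdot\mid s)) \le (|\mathcal{A}|-1)^2 m^{-2} \sum_a \pi(a\mid s)^{-1} = O(1/m^2)$, which is in particular $O_p(1/m^2)$.

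For on-policy sampling, the $m$ actions drawn at $s$ are (conditionally) i.i.d.\ from $\pi(\cdot\mid s)$, so the count vector is $\mathrm{Multinomial}(m, \pi(\cdot\mid s))$. Each marginal has variance $\pi(a\mid s)(1-\pi(a\mid s))/m$, so by Chebyshev (or the central limit theorem) $N_a(m)/m - \pi(a\mid s) = O_p(1/\sqrt{m})$, a per-coordinate deviation of order $\delta = 1/\sqrt m$. Plugging into the same chi-squared bound gives $D_{\mathrm{KL}}(\pi_{\mathcal{D}}(\cdot\mid s)\,\|\,\pi(\cdot\mid s)) \le \sum_a (N_a(m)/m - \pi(a\mid s))^2/\pi(a\mid s) = O_p(1/m)$, since a finite sum of $O_p(1/m)$ terms is $O_p(1/m)$. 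To make the separation between the two rates genuine rather than merely an upper bound, I will also record the matching lower bound: for any fixed $a$ with $0 < \pi(a\mid s) < 1$, the CLT gives $(N_a(m)/m - \pi(a\mid s))^2 = \Theta_p(1/m)$, so the on-policy KL is in fact $\Theta_p(1/m)$ and strictly slower than the ROS rate.

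I expect the main obstacle to be the ROS bounded-deviation lemma: the induction must handle ties and the $t=0$ initialization carefully (where all deficits vanish and the selection rule is indifferent), and must confirm that the invariant $g_a(t)\le 1$ is preserved under \emph{any} tie-breaking rule, so that the deterministic $O(1/m^2)$ bound holds independent of implementation details. A secondary subtlety, which I will address briefly, is the conditioning in the on-policy case: conditioning on the event ``$s$ is visited exactly $m$ times'' can in principle couple the actions with the visitation structure, so I will argue that the per-visit action draws remain conditionally i.i.d.\ from $\pi(\cdot\mid s)$ and invoke standard multinomial concentration.
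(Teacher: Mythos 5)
Your proof is essentially correct, but note first that the paper itself contains no proof of this statement: Theorem~\ref{thm:ros_kl} is imported verbatim as Theorem~1 of \citet{zhong2022robust} and is restated only so it can be invoked in the proofs of Theorems~\ref{thm:maprops_d} and~\ref{thm:maprops_kl}. Judged against the original source rather than this paper, your route matches the spirit of the original argument: the tabular softmax parameters in Assumption~\ref{assump:ros_most_under_sampled} decouple the analysis across states, the idealized sampler reduces to repeatedly selecting $\argmin_a g_a(t)$ with $g_a(t) = N_a(t) - t\,\pi(a\mid s)$, and your induction (any minimizer satisfies $g_{a^*}(t)\le 0$ because $\sum_a g_a(t)=0$, hence $-(|\mathcal{A}|-1)\le g_a(t)\le 1$ for all $t$, robustly to ties and to the degenerate $t=0$ step) recovers the bounded-deviation property that drives the $O_p(1/m^2)$ rate. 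Your use of the domination $D_{\mathrm{KL}}(P\|Q)\le \chi^2(P\|Q)$ is a clean substitute for a second-order expansion of the KL and in fact yields a \emph{deterministic} $O(1/m^2)$ bound for ROS, strictly stronger than the claimed stochastic boundedness. The on-policy half via multinomial fluctuations plus the same $\chi^2$ bound is standard and correct, and your $\Theta_p(1/m)$ lower bound is a worthwhile addition that makes the rate separation genuine rather than a pair of one-sided bounds.

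Two clarifications are needed before the write-up is airtight. First, your proposed resolution of the conditioning subtlety --- that the action draws ``remain conditionally i.i.d.''\ given that $s$ is visited exactly $m$ times --- is false in general MDPs: conditioning on the total visit count can bias the actions (e.g., if some action makes a return to $s$ unlikely, conditioning on many visits tilts against it). The correct reading, consistent with how \citet{zhong2022robust} treat it, is that the theorem concerns the first $m$ visits to $s$ along the sample path: by the Markov property, the actions drawn at successive visits under on-policy sampling are i.i.d.\ from $\pi(\cdot\mid s)$, and no conditioning on a terminal count is performed; for the ROS half the issue is moot since your count bound holds deterministically on every realization. Second, your $\chi^2$ constant $\sum_a \pi(a\mid s)^{-1}$ requires $\min_a \pi(a\mid s) > 0$; this should be stated explicitly, and it is guaranteed here because the behavior policy is softmax-parameterized and must coincide with the target policy at initialization, forcing the target to have full support. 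With those two repairs your proposal is a complete, self-contained proof of the imported result.
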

\begin{theorem}[Proposition 1 in~\citet{corrado_props_2023}]
\label{prop:props_d}
Let $\vs$ be a state that we visit $m$ times. Under Assumption~\ref{assump:ros_most_under_sampled}, we have $\forall a \in \mathcal{A}$ that:
\[
\lim_{m \to \infty} \pi_{\mathcal{D}}(a|s) = \pi(a|s).
\]
\end{theorem}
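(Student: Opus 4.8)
The plan is to show that the per-action discrepancy between the empirical count and the expected count stays bounded uniformly in $m$, so that dividing by $m$ forces convergence. Fix the state $s$ and let $N_m(a)$ denote the number of times action $a$ has been selected among the first $m$ visits to $s$, so that $\pi_{\mathcal{D}}(a|s) = N_m(a)/m$. Since only visits to $s$ alter these counts, and the behavior update at $s$ depends only on the counts at $s$, I would work directly with the subsequence of visits to $s$ and treat $m$ as the visit index, ignoring interleaved visits to other states. Define the discrepancy $e_m(a) = N_m(a) - m\,\pi(a|s)$; note that $\sum_{a \in \mathcal{A}} e_m(a) = m - m = 0$ for every $m$, since the counts sum to $m$ and the probabilities sum to $1$. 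The proposition is then equivalent to the claim that $|e_m(a)|$ is bounded uniformly in $m$, because $|\pi_{\mathcal{D}}(a|s) - \pi(a|s)| = |e_m(a)|/m$.

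Next I would translate Assumption~\ref{assump:ros_most_under_sampled} into a concrete selection rule. With the softmax parameterization and $\alpha \to \infty$, the behavior update starts from $\vphi = \vtheta$; the log-likelihood gradient at state $s$ with respect to the logit of $a$ equals $N_m(a) - m\,\pi(a|s) = e_m(a)$, and moving against this gradient gives the largest logit boost — and, in the limit, all the probability mass — to the minimizer of $e_m(a)$, i.e.\@ the most under-sampled action. Thus at visit $m+1$ the algorithm selects $a^\star = \arg\min_{a} e_m(a)$, giving the deterministic update $e_{m+1}(a^\star) = e_m(a^\star) + 1 - \pi(a^\star|s)$ and $e_{m+1}(a) = e_m(a) - \pi(a|s)$ for $a \ne a^\star$.

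The core step is an induction establishing the invariant $e_m(a) < 1$ for all $a$ and all $m$. The base case $e_0(a) = 0$ is immediate. For the inductive step, every non-selected action has its discrepancy decrease by $\pi(a|s) > 0$ and so remains below $1$; the selected action $a^\star$ is the minimizer of $e_m$, and because the discrepancies sum to zero over $|\mathcal{A}| \ge 2$ actions its minimum is at most the average $0$, so $e_m(a^\star) \le 0$ and hence $e_{m+1}(a^\star) = e_m(a^\star) + 1 - \pi(a^\star|s) \le 1 - \pi(a^\star|s) < 1$, using that the softmax target $\pi_\vtheta$ has full support $\pi(a|s) > 0$. Combining $e_m(a) < 1$ with the zero-sum identity yields the matching lower bound $e_m(a) = -\sum_{a' \ne a} e_m(a') > -(|\mathcal{A}| - 1)$, so that $|e_m(a)| < |\mathcal{A}| - 1$ for every $m$.

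Finally I would conclude $|\pi_{\mathcal{D}}(a|s) - \pi(a|s)| = |e_m(a)|/m < (|\mathcal{A}| - 1)/m \to 0$, which is exactly the claimed pointwise convergence for every $a \in \mathcal{A}$. The main obstacle is the inductive invariant: the argument hinges on correctly identifying that the greedy rule selects the discrepancy-minimizer, and that this minimizer necessarily has non-positive discrepancy (a consequence of the zero-sum constraint), together with the full-support property of the softmax target so that the strict bound survives. Tie-breaking among equally under-sampled actions is harmless, since the bound holds for any choice of minimizer.
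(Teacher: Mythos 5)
The paper itself does not prove this statement: it is imported verbatim as Proposition~1 of \citet{corrado_props_2023} (whose analysis in turn rests on the ROS results of \citet{zhong2022robust}) and then used as a black box in the proofs of Theorems~\ref{thm:maprops_d} and~\ref{thm:maprops_kl}. Your proposal therefore goes beyond what the paper supplies, and it is correct. The reduction of Assumption~\ref{assump:ros_most_under_sampled} to the greedy rule is right: with tabular softmax logits, the log-likelihood gradient at $\vphi=\vtheta$ with respect to the logit of $(s,a)$ is exactly $N_m(a)-m\,\pi(a|s)=e_m(a)$, so the $\alpha\to\infty$ step against this gradient concentrates all mass on $\argmin_a e_m(a)$, and only visits to $s$ move these coordinates, which justifies passing to the visit subsequence. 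The induction is sound: non-selected actions have $e_{m+1}(a)=e_m(a)-\pi(a|s)<1$, and since $\sum_a e_m(a)=0$ with $|\mathcal{A}|\ge 2$, the minimizer satisfies $e_m(a^\star)\le 0$, so $e_{m+1}(a^\star)\le 1-\pi(a^\star|s)<1$ using full support of the softmax target; the zero-sum identity then yields $|e_m(a)|<|\mathcal{A}|-1$ and hence $|\pi_{\mathcal{D}}(a|s)-\pi(a|s)|<(|\mathcal{A}|-1)/m\to 0$. This is essentially the same bounded-discrepancy mechanism that drives the original ROS analysis, but your version buys something stronger than the cited statement: the bound is deterministic and uniform in $m$, giving sure convergence at an explicit $O(1/m)$ rate per action (and, via a quadratic expansion of the KL divergence around a full-support $\pi$, it also recovers the $O(1/m^2)$ KL rate of Theorem~\ref{thm:ros_kl} in this idealized regime), whereas the imported results are phrased via stochastic boundedness.

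Two minor points. First, your claim that the proposition is ``equivalent'' to uniform boundedness of $|e_m(a)|$ overstates the converse: $e_m(a)/m\to 0$ does not imply $e_m(a)$ bounded; you only need, and only use, the forward implication. Second, strictness of the invariant $e_m(a)<1$ relies on $\pi(a|s)>0$, which you correctly obtain from the softmax parameterization; note that even if some $\pi(a|s)=0$ the non-strict bound $e_m(a)\le 1$ would suffice for the limit, so the argument is robust to that edge case. Tie-breaking among minimizers is indeed harmless, as you say, since $e_m(a^\star)\le 0$ holds for any minimizer.
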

\medskip
We now prove similar results for CoSER. We first make an assumption similar to Assumption~\ref{assump:ros_most_under_sampled} posed by \citet{zhong2022robust}.

\begin{assumption}[Restated Assumption~\ref{assump:maprops_most_under_sampled}]
\label{assump:maprops_most_under_sampled_restated}
CoSER uses a learning rate of $\alpha \to \infty$
and the behavior policy is parameterized as a softmax function, i.e., $\pi_\phi(a | s) \propto e^{\vphi_{s,a}}$, where for each state $\vs$ and action $\va$, we have a parameter $\vphi_{s,a}$. This assumption implies that CoSER always selects the most under-sampled joint action in each state.
\end{assumption}

To see why this assumption implies that CoSER always selects the most under-sampled joint action in each state, recall that at the start of every behavior policy update, we have $\pi_\vphi \equiv \pi_\vtheta$ so that $\pi_\vphi(a|s)/\pi_\vtheta(a|s) = 1$ for all $(s,a)$. Thus, clipping is not applied, and the CoSER gradient reduces to the ROS gradient:
\begin{equation}
\begin{split}
    \frac{1}{|\sD|}\sum_{(\vs,\va)\in\sD}\nabla_\vphi\sL(\vphi, \vtheta, \vs, \va, \varepsilon) &= \frac{1}{|\sD|}\sum_{(\vs,\va)\in\sD} \nabla_\vphi \left(-\frac{\pi_\vphi(\va|\vs)}{\pi_\vtheta(\va|\vs)}\right)\\
    &= \frac{1}{|\sD|}\sum_{(\vs,\va)\in\sD} - \nabla_\vphi \log\pi_\vphi(\va|\vs)\frac{\pi_\vphi(\va|\vs)}{\pi_\vtheta(\va|\vs)} \\
    &= \frac{1}{|\sD|}\sum_{(\vs,\va)\in\sD} - \nabla_\vphi \log\pi_\vphi(\va|\vs)
\end{split}
\end{equation}
Thus, since Assumption~\ref{assump:ros_most_under_sampled} implies that ROS samples the most under-sampled action, so does CoSER.

Our first theorem shows how  empirical state visitation distributions converge to their true state visitation distributions under CoSER.  
We use $d_{\sD_m}$ and $\pi_{\sD_m}$, as the empirical state visitation distribution and empirical policy after $m$ state-action pairs have been taken, respectively.
That is, $d_{\sD_m}(s)$ is the proportion of the $m$ states that are $s$, $\pi_{\sD_m}(a|s)$ is the proportion of the time that action $a$ was observed in state $s$.
We use subscript $i$ to denote analogous quantities for agent $i$.

\begin{theorem}[Restated Theorem~\ref{thm:maprops_d}]
\label{thm:maprops_d_restated}
%
%
Assume that $\sS$ and $\sA$ are finite.
Under CoSER with Assumption~\ref{assump:maprops_most_under_sampled_restated},
the empirical joint state visitation distribution, $d_{\sD_m}$, converges to the joint state distribution of $\pi$, $d_\pi$, with probability $1$:
\[
    \forall s, \lim_{m\rightarrow \infty} d_{\sD_m}(s) = d_\pi(s).
\]
Moreover, the empirical state visitation distribution for each agent $i$, $d_{\sD_m,i}$, converges to the state distribution of $\pi_i$, $d_{\pi_i}$, with probability $1$:
\[
    \forall s_i, \lim_{m\rightarrow \infty} d_{\sD_m,i}(s_i) = d_{\pi_i}(s_i) \quad \forall i\in \sI.
\]
\end{theorem}
\begin{proof}
    Since the behavior policy $\pi(a|s) = \prod_{i=1}^n\pi_i(a|s_i)$ is a single agent mapping joint states to joint actions, we can  immediately apply Theorem~\ref{prop:props_d} to obtain the result for the joint state visitation distribution. The result for each agent's state visitation distribution follows from marginalizing the joint state visitation distribution: $$\lim_{m \to \infty} d_{\sD_m}(s_i) = \lim_{m \to \infty} \sum_{s_{-i}} d_{\sD_m}(s)
= \sum_{s_{-i}} \lim_{m \to \infty} d_{\sD_m}(s)
= \sum_{s_{-i}} d_\pi(s)
= d_{\pi_i}(s_i)$$
\end{proof}

Our second theorem shows that joint sampling error decreases faster under CoSER than under on-policy sampling. 
We further establish a novel multi-agent result: this accelerated reduction in joint sampling error also guarantees that sampling error w.r.t.\@ each agent decreases at the same accelerated rate. 
We let $\pi_i$ denote the policy of agent $i$ and let $\pi_{\sD,i}$ denote the empirical policy of agent $i$.
\begin{theorem}[Restated Theorem~\ref{thm:maprops_kl}]
\label{thm:maprops_kl_restated}
Let $s$ be a particular state that is visited $m$ times during data collection and assume that $|\mathcal{A}| \geq 2$. Under Assumption~\ref{assump:maprops_most_under_sampled}, we have
\begin{enumerate}
    \item $D_{\mathrm{KL}}(\pi_{\mathcal{D}}(\cdot | s) \| \pi(\cdot | s)) = O_p\left(\frac{1}{m^2}\right)$ under CoSER while  
$D_{\mathrm{KL}}(\pi_{\mathcal{D}}(\cdot | s) \| \pi(\cdot | s)) = O_p\left(\frac{1}{m}\right)$ under on-policy sampling
    \item $D_{\mathrm{KL}}(\pi_{\mathcal{D}, i}(\cdot | s) \| \pi_i(\cdot | s)) = O_p\left(\frac{1}{m^2}\right)$ under CoSER while  
$D_{\mathrm{KL}}(\pi_{\mathcal{D}, i}(\cdot | s) \| \pi_i(\cdot | s)) = O_p\left(\frac{1}{m}\right)$ under on-policy sampling
\end{enumerate}
where $O_p$ denotes stochastic boundedness. 
\end{theorem}
\begin{proof}
Since the behavior policy $\pi(a|s) = \prod_{i=1}^n\pi_i(a|s_i)$ is a single agent mapping joint states to joint actions, we can immediately  apply the convergence result from Theorem~\ref{thm:ros_kl} to obtain the convergence result for joint sampling error follows.
    Since $D_{\mathrm{KL}}(\pi_{\mathcal{D}}(\cdot | s) \| \pi(\cdot | s)) \geq D_{\mathrm{KL}}(\pi_{\mathcal{D}, i}(\cdot | s) \| \pi_i(\cdot | s))$ for all agents $i\in \sI$, the result follows the result for joint sampling error.
\end{proof}
\begin{remark}
    This result implies that no individual agent is disproportionately affected by sampling error under CoSER.
    Because sub-optimal behavior by even a single agent (which can arise from sampling error) can cause all agents to converge to a sub-optimal joint policy~\citep{zhao2023optimistic}, reducing sampling error w.r.t.\@ each agent is important for reliable convergence.
\end{remark}

\newpage
\section{Computing Sampling Error}
\label{app:sampling_error}

Similar to \citet{zhong2022robust} and \citet{corrado_props_2023}, we measure sampling error as the \KL{} divergence $D_{\KL{}}(\pi_\sD || \pi_\vtheta)$ between the empirical joint policy $\pi_\sD$ and the target joint policy $\pi_\vtheta$:
\begin{equation}
    D_{\KL{}}(\pi_\sD || \pi_\vtheta) 
    = \mathbb{E}_{\vs\sim \sD, \va\sim\pi_\sD(\cdot|\vs)}\left[\log\left(\frac{\pi_\sD(\va|\vs)}{\pi_\vtheta(\va|\vs)}\right)\right].
\end{equation}
We compute a parametric estimate of $\pi_\sD$ by letting $\vtheta'$ be the parameters of neural network that takes joint states as input and outputs a distribution over joint actions $\pi_{\vtheta'}(\cdot |\vs)$ and maximizing the log-likelihood of $\sD$ under $\pi_\vtheta'$
\begin{equation}
    \vtheta_\textsc{MLE} = \argmax_{\vtheta'}\sum_{(\vs,\va)\in\sD}\log\pi_{\vtheta'}(\va|\vs)
\end{equation}
using stochastic gradient ascent.
After computing $\vtheta_\textsc{MLE}$, we then estimate sampling error using the Monte Carlo estimator:
\begin{equation}
    D_{\KL{}}(\pi_\sD || \pi_\vtheta) 
    \approx \sum_{(\vs,\va)\in\sD}
    \left(\log\pi_{\vtheta_\textsc{MLE}}(\va|\vs) - \log\pi_\vtheta(\va|\vs)\right).
\end{equation}
We compute sampling error w.r.t.\@ individual agents in a similar fashion: We compute estimate $\pi_{\sD_i}$ as the maximum likelihood policy under $\sD_i$ and then estimate sampling error using the Monte Carlo estimator:
\begin{equation}
    D_{\KL{}}(\pi_{\sD_i} || \pi_{\vtheta_i}) 
    \approx \sum_{(\vs_i,\va_i)\in\sD}
    \left(\log\pi_{\vtheta_{i,\textsc{MLE}}}(\va_i|\vs_i) - \log\pi_{\vtheta_i}(\va_i|\vs_i)\right).
\end{equation}

\section{Extending CoSER to Continuous Action Tasks}
\label{app:continuous_action}
The centralized behavior policy we use is specific to softmax policies for discrete action settings. 
However, this architecture easily extends to continuous action settings. 
Continuous policies are typically parameterized as Gaussians $\sN(\vmu_i(\vs_i),\sigma_i(\vs_i))$ with mean $\vmu_i(\vs_i)$ and variance $\sigma_i(\vs_i)$.
The joint policy is then a Gaussian with mean $\vmu(\vs) = (\vmu_1(\vs_1), \dots, \vmu_n(\vs_n))$ and variance $\sigma(\vs) = (\sigma_1(\vs_1), \dots, \sigma_n(\vs_n))$.
Similar to how the behavior network add an adjustment to the joint logits in the discrete setting, in a continuous setting, the behavior network would output an adjustment to the joint mean and joint variance.

More concretely,  we first compute the mean $\vmu(\vs)$ and variance $\sigma(\vs) $ of the joint policy.
Next, we define a neural network $\Delta_{\vphi}(\vs): \sS \to \mathbb{R}^{|\sA|}$ that outputs an adjustment to the mean and variance of each joint action dimension. Let $\Delta^\mu_{\vphi}(\vs)$ denote the mean adjustments and let $\Delta^\sigma_{\vphi}(\vs)$ denote the variance adjustments. The behavior policy is then\footnote{We again omit the behavior policy's dependence on $\vtheta$ for clarity, as $\vtheta$ is fixed during behavior updates.}
\begin{equation}
    \pi_\vphi(\cdot|\vs) = \sN\left(\mu(\vs) + \Delta^\mu_\vphi(\vs),  \sigma(\vs) + \Delta^\sigma_\vphi(\vs)\right)
\end{equation}
To initialize behavior policy to match $\pi_\vtheta$ at the start of each update, we set the final layer of $\Delta_\vphi$ to the zero vector so that $\Delta^\mu_\vphi(\vs) = \mathbf{0}$ and $\Delta^\sigma_\vphi(\vs) = \mathbf{0}$ for all $\vs \in \sS$.
Then, we perform minibatch gradient ascent on $\sL(\vphi)$ to adapt the logit adjustment so that the behavior policy places larger probability on under-sampled actions just as described in Algorithm~\ref{alg:maprops}.


\begin{figure}[H]
\begin{subfigure}{0.49\linewidth}
    \def\arraystretch{1.5}
    \centering
    \begin{tabular}{cC|CCC}
        \multicolumn{1}{c}{} & & \multicolumn{3}{c}{Agent 2} \\
        && A & B & C \\
        \hline
        \multirow{3}{*}{\rotatebox{90}{Agent 1}}
        &A & 11^\star & -30 & 0 \\
        &B & -30 & 7^\dagger & 0 \\
        &C & 0 & 6 & 5 \\
    \end{tabular}

    \caption{Original Climbing game~\citep{claus1998dynamics}.}
    \label{fig:climbing_og}
\end{subfigure}
\hfill
\begin{subfigure}{0.49\linewidth}
    \def\arraystretch{1.5}
    \centering
    \begin{tabular}{cC|CCC}
        \multicolumn{1}{c}{} & & \multicolumn{3}{c}{Agent 2} \\
        && A & B & C \\
        \hline
        \multirow{3}{*}{\rotatebox{90}{Agent 1}}
        &A & 11^\star & -3 & 0 \\
        &B & -3 & 7^\dagger & 0 \\
        &C & 0 & 3 & 2 \\
    \end{tabular}

    \caption{Modified Climbing game.}
    \label{fig:climbing2}
\end{subfigure}
\end{figure}

\begin{figure}[H]
\begin{subfigure}{0.49\linewidth}
    \def\arraystretch{1.5}
    \centering
    \begin{tabular}{cC|CCC}
        \multicolumn{1}{c}{} & & \multicolumn{3}{c}{Agent 2} \\
        && A & B & C \\
        \hline
        \multirow{3}{*}{\rotatebox{90}{Agent 1}}
        &A & -k & 0 & 10^\star\\
        &B & 0 & 2^\dagger & 0 \\
        &C & 10^\star & 0 & -k \\
    \end{tabular}
    \caption{Original Penalty game~\citep{claus1998dynamics}.}
    \label{fig:penalty_og}
\end{subfigure}
\hfill
\begin{subfigure}{0.49\linewidth}
    \def\arraystretch{1.5}
    \centering
    \begin{tabular}{cC|CCC}
        \multicolumn{1}{c}{} & & \multicolumn{3}{c}{Agent 2} \\
        && A & B & C \\
        \hline
        \multirow{3}{*}{\rotatebox{90}{Agent 1}}
        &A & -7 & 0 & 10^\star\\
        &B & 0 & 2^\dagger & 0 \\
        &C & 10^\star & 0 & -7 \\
    \end{tabular}
    \caption{Modified Penalty game. We choose $k = 7$.}
    \label{fig:penalty2}
\end{subfigure}
\caption{Original and modified $3\times3$ matrix games, where $\star$ denotes optimal equilibria, and $\dagger$ denotes suboptimal equilibria.
}
\end{figure}

\section{Multi-Agent Games}
\label{app:games}

In this appendix, we further describe each multi-agent game we use. We additionally detail how we modify some games to ensure the expected policy gradient encourages cooperation at initialization.

\subsection{Game Descriptions}

\begin{figure}
    \centering
    \def\arraystretch{1.3}
    \newcommand{\game}[5]{
        \small
        \begin{tabular}{cCCC}
            \multicolumn{2}{c}{} & \multicolumn{2}{c}{Agent 2} \\
            && \multicolumn{1}{|C}{A} & B \\
            \cline{2-4}
            \multirow{2}{*}{\rotatebox{90}{Agent 1}} 
                & A &\multicolumn{1}{|C}{\bf{#1}} & #2 \\
                & B &\multicolumn{1}{|C}{#3} & #4 \\
        \end{tabular}
        \caption*{\small Game #5}
    }

    \begin{subfigure}{0.3\textwidth}
        \centering
        \game{4,4}{3,3}{2,2}{1,1}{1}
    \end{subfigure}
    \begin{subfigure}{0.3\textwidth}
        \centering
        \game{4,4}{3,3}{2,1}{2,1}{2}
    \end{subfigure}
    \begin{subfigure}{0.3\textwidth}
        \centering
        \game{4,4}{3,2}{2,3}{1,1}{3}
    \end{subfigure}
    \vspace{1em}

    \begin{subfigure}{0.3\textwidth}
        \centering
        \game{4,4}{3,2}{3,2}{1,1}{4}
    \end{subfigure}
    \begin{subfigure}{0.3\textwidth}
        \centering
        \game{4,4}{3,1}{2,1}{1,3}{5}
    \end{subfigure}
    \begin{subfigure}{0.3\textwidth}
        \centering
        \game{4,4}{3,3}{2,1}{1,2}{6}
    \end{subfigure}
    \vspace{1em}

    \begin{subfigure}{0.3\textwidth}
        \centering
        \game{4,5}{3,2}{1,1}{1,1}{7}
    \end{subfigure}
    \begin{subfigure}{0.3\textwidth}
        \centering
        \game{4,5}{3,2}{1,1}{2,3}{8}
    \end{subfigure}
    \begin{subfigure}{0.3\textwidth}
        \centering
        \game{4,5}{3,2}{2,3}{1,1}{9}
    \end{subfigure}
    \vspace{1em}

    \begin{subfigure}{0.3\textwidth}
        \centering
        \game{4,5}{3,1}{1,1}{2,2}{10}
    \end{subfigure}
    \begin{subfigure}{0.3\textwidth}
        \centering
        \game{4,5}{3,1}{1,1}{2,3}{11}
    \end{subfigure}
    \begin{subfigure}{0.3\textwidth}
        \centering
        \game{4,5}{3,1}{2,3}{2,1}{12}
    \end{subfigure}
    \vspace{1em}

    \begin{subfigure}{0.3\textwidth}
        \centering
        \game{4,4}{2,3}{3,1}{1,3}{13}
    \end{subfigure}
    \begin{subfigure}{0.3\textwidth}
        \centering
        \game{4,4}{2,3}{3,1}{2,2}{14}
    \end{subfigure}
    \begin{subfigure}{0.3\textwidth}
        \centering
        \game{4,4}{2,2}{3,1}{1,3}{15}
    \end{subfigure}
    \vspace{1em}

    \begin{subfigure}{0.3\textwidth}
        \centering
        \game{4,4}{2,2}{3,2}{\underline{1,3}}{16}
    \end{subfigure}
    \begin{subfigure}{0.3\textwidth}
        \centering
        \game{4,4}{3,1}{2,2}{\underline{1,3}}{17}
    \end{subfigure}
    \begin{subfigure}{0.3\textwidth}
        \centering
        \game{4,4}{2,1}{1,2}{\underline{3,3}}{18}
    \end{subfigure}
    \vspace{1em}

    \begin{subfigure}{0.3\textwidth}
        \centering
        \game{5,5}{1,3}{3,1}{\underline{2,2}}{19}
    \end{subfigure}
    \begin{subfigure}{0.3\textwidth}
        \centering
        \game{5,5}{1,2}{3,1}{\underline{2,2}}{20}
    \end{subfigure}
    \begin{subfigure}{0.3\textwidth}
        \centering
        \game{5,5}{1,2}{2,1}{\underline{3,3}}{21}
    \end{subfigure}

    \caption{All structurally distinct $2 \times 2$ no-conflict matrix games from Section 11.2.1 of~\citet{albrecht2024multi}. Each cell shows the reward pair $(r_1, r_2)$ for Agents 1 and 2. We bold the welfare-optimal Nash equilibrium and underline any welfare-suboptimal Nash equilibria. 
    In all games, the optimal outcome is $(A,A)$. 
    To ensure the true policy gradient w.r.t.\@ uniformly random policies increases the probability of the optimal outcome, we change the reward associated with the optimal outcome to $(4,5)$ in games 7-12 and $(5,5)$ in games 19-21.}
    \label{fig:2x2_games_all}
\end{figure}

\textbf{BoulderPush:} 
In this game, agents must coordinate to push a boulder to a specified goal position.
Agents have four actions that move them one position up, down, left, or right.
To push the boulder, all agents must move to the positions above the boulder (\textit{i.e.}, the cells indicating the direction in which the boulder must be pushed) and then move in the indicated direction.
Agents receive reward $0.1$ for successfully pushing the boulder.
If only one agent attempts to push the boulder independently (without the other agent),  they receive reward $-0.02$.

\textbf{Level-based Foraging:}
In this game, agents must coordinate to collect food items on the game board.
Agents have five actions; four actions can move them one position up, down, left, or right. 
The fifth action is the ``forage'' action where the agent attempts to forage a food item.
To forage the food, both agents must move next to it and choose the forage action.
Agents receive reward $0.5$ for successfully foraging the food.
If only one agent attempts to forage independently (without the other agent),  they receive reward $-0.015$.

\subsection{Reward Modifications}

Our work focuses on improving the reliability of independent on-policy policy gradient algorithms \textit{when the expected policy gradient of each agent points toward optimality at initialization.}
%
%
However, in most of these games, the expected policy gradient points away from optimality at initialization.\footnote{\citet{christianos2022pareto} show that on-policy policy gradient algorithms like MAPPO and MAA2C consistently converge sub-optimally in the same tasks we consider.}  
The probability of all agents cooperating is very small at initialization, so the expected return of any agent $i$ acting optimally (\textit{e.g.} pushing the boulder) has lower expected return than acting sub-optimally (\textit{e.g.} avoiding the boulder)~\citep{christianos2022pareto,zhao2023optimistic}.
To ensure the task setting aligns with our problem of interest, we rescale the reward function in some environments so that the expected policy gradient under uniformly initialized policies encourages cooperation. 
Without this adjustment, these games would not reflect the failure mode we aim to study and would be unsuitable for testing our hypotheses.

\begin{itemize}
    \item \textbf{LBF:} We reduce the penalty for failed cooperation from $-0.1$ to $-0.015$.
    \item \textbf{BoulderPush:} We reduce the penalty for failed cooperation from $-0.1$ to $-0.02$.
    \item \textbf{$3\times3$ Climbing game:} We show the original Climbing game rewards in Fig.~\ref{fig:climbing_og} and our modified Climbing game in Fig.~\ref{fig:climbing2}. 
    \item \textbf{$3\times3$ Penalty game:} We show the original Penalty game rewards in Fig.~\ref{fig:penalty_og} and our modified Penalty game in Fig.~\ref{fig:penalty2}. 
    \item \textbf{$2\times2$ no-conflict matrix games:} In all games, the optimal outcome originally has reward $4,4$. In games 7-12, we change this reward to $4,5$. In games 19-21, we change it to $5,5$.
\end{itemize}



\section{Additional Experiments}
\label{app:experiments}


In this section, we provide additional experiments excluded from main paper due to space constraints:
\begin{enumerate}
    \item Training curves for all 21 distinct $2\times2$ no-conflict matrix games using MAPPO (Fig.~\ref{fig:2x2_mappo_all}) and IPPO (Fig.~\ref{fig:2x2_ippo_all})
    \item Sampling error curves for RL training in all 21 distinct $2\times2$ no-conflict matrix games (Fig.~\ref{fig:2x2_se_joint_all_ippo_maprops} and Fig.~\ref{fig:2x2_se_joint_all_ippo_props}).
    \item Training curves for BoulderPush and Level-based foraging tasks using IPPO (Fig.~\ref{fig:gw_bpush_lbf_ippo}).
\end{enumerate}


\begin{figure}
    \centering
    \includegraphics[width=0.95\linewidth]{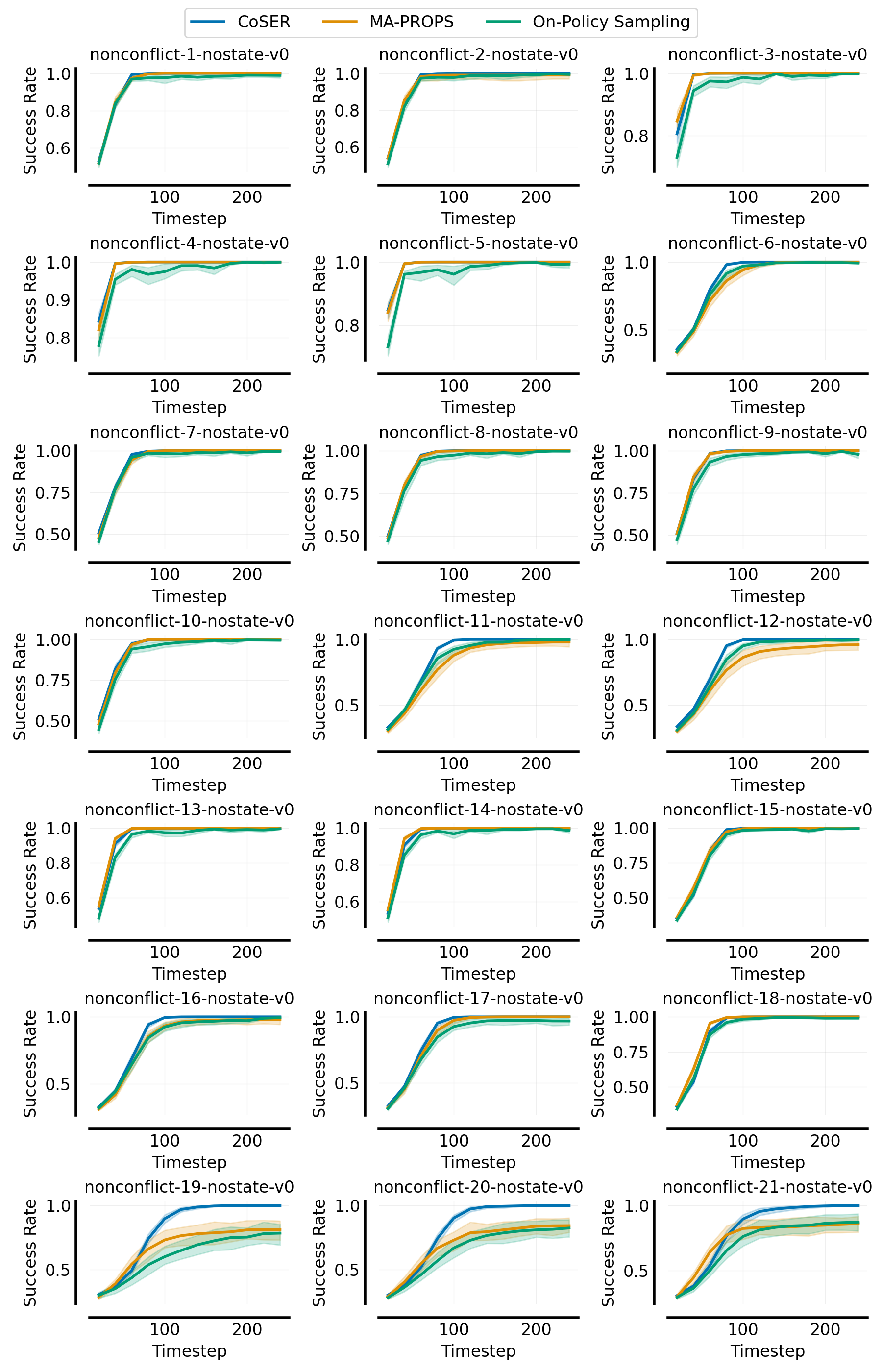}
    \caption{MAPPO mean success rate during training for all structurally distinct $2\times2$ no-conflict matrix games. Solid curves denote means over 100 seeds. Shaded regions denote 95\%  bootstrap confidence intervals. }
    \label{fig:2x2_mappo_all}
\end{figure}

\begin{figure}
    \centering
    \includegraphics[width=0.95\linewidth]{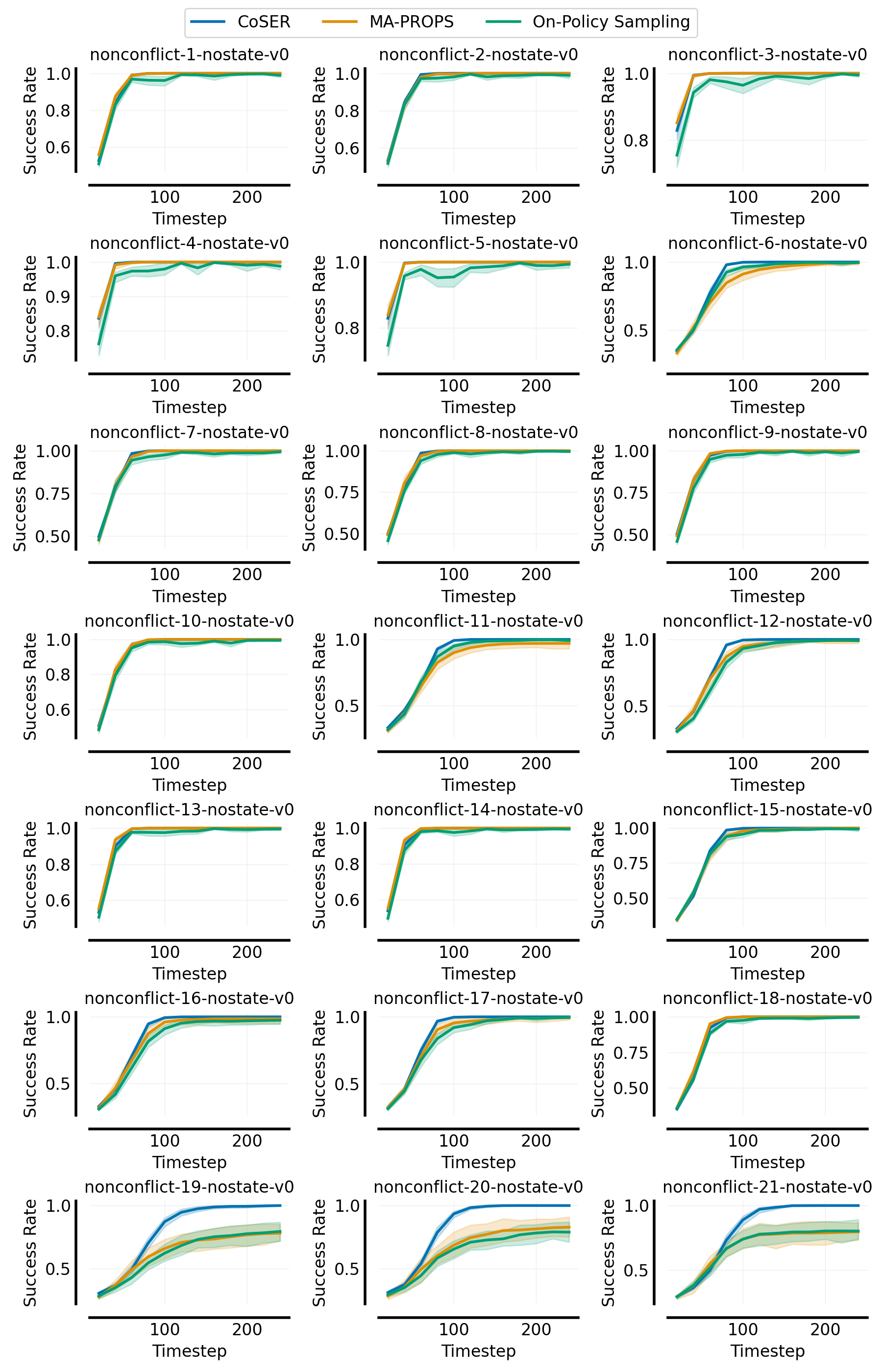}
    \caption{IPPO mean success rate during training for all structurally distinct $2\times2$ no-conflict matrix games. Solid curves denote means over 100 seeds. Shaded regions denote 95\%  bootstrap confidence intervals. }
    \label{fig:2x2_ippo_all}
\end{figure}

\begin{figure}
    \centering
    \includegraphics[width=0.95\linewidth]{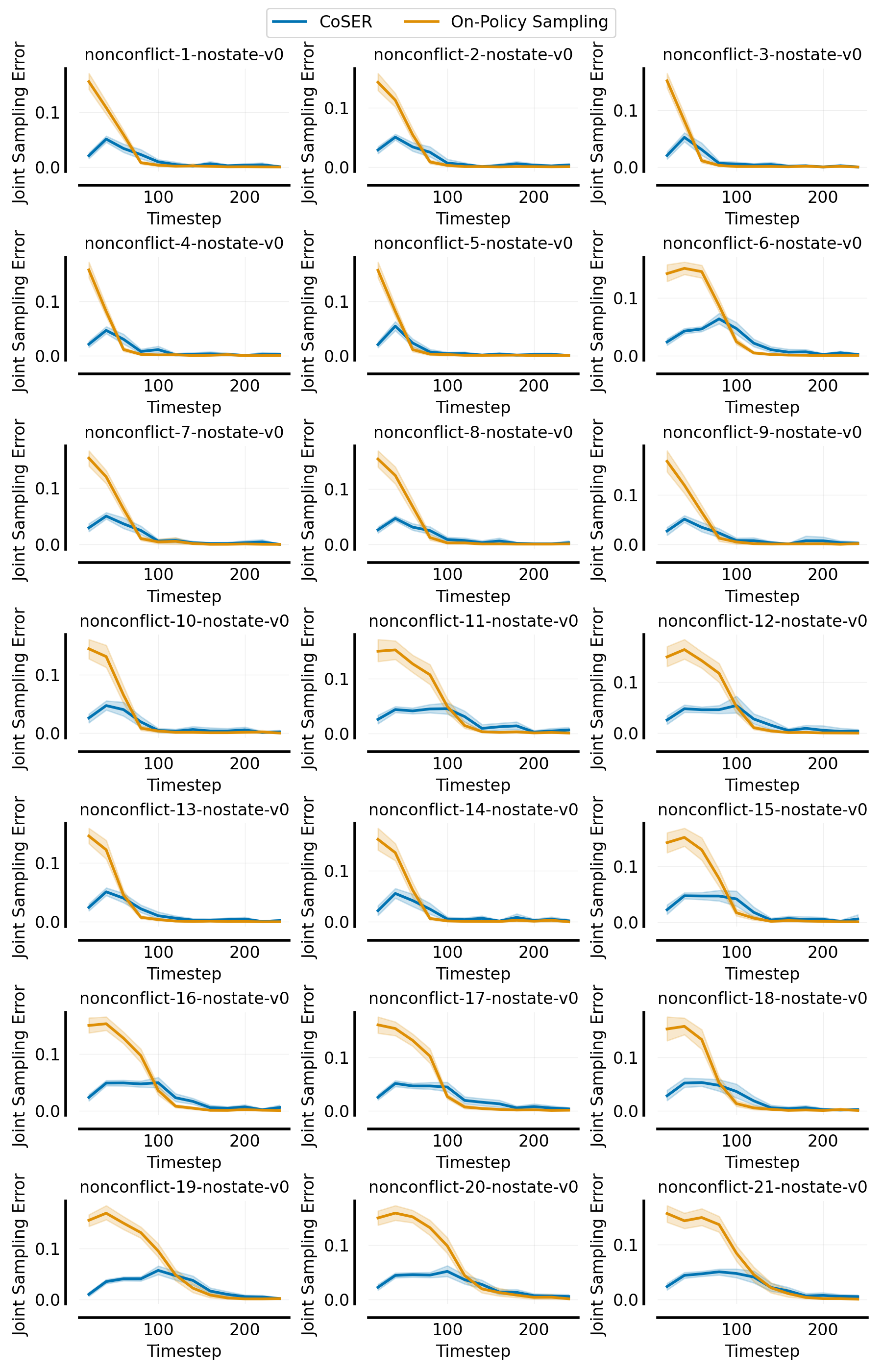}
    \caption{Mean joint sampling error for MAPPO + MA-PROPS training on all structurally distinct $2\times2$ no-conflict matrix games. Solid curves denote means over 100 seeds. Shaded regions denote 95\%  bootstrap confidence intervals.  {\color{MidnightBlue}\underline{Takeaway:} CoSER reduces sampling error w.r.t.\@ on-policy sampling by a large amount in all games.}}
    \label{fig:2x2_se_joint_all_ippo_maprops}
\end{figure}
\begin{figure}
    \centering
    \includegraphics[width=0.95\linewidth]{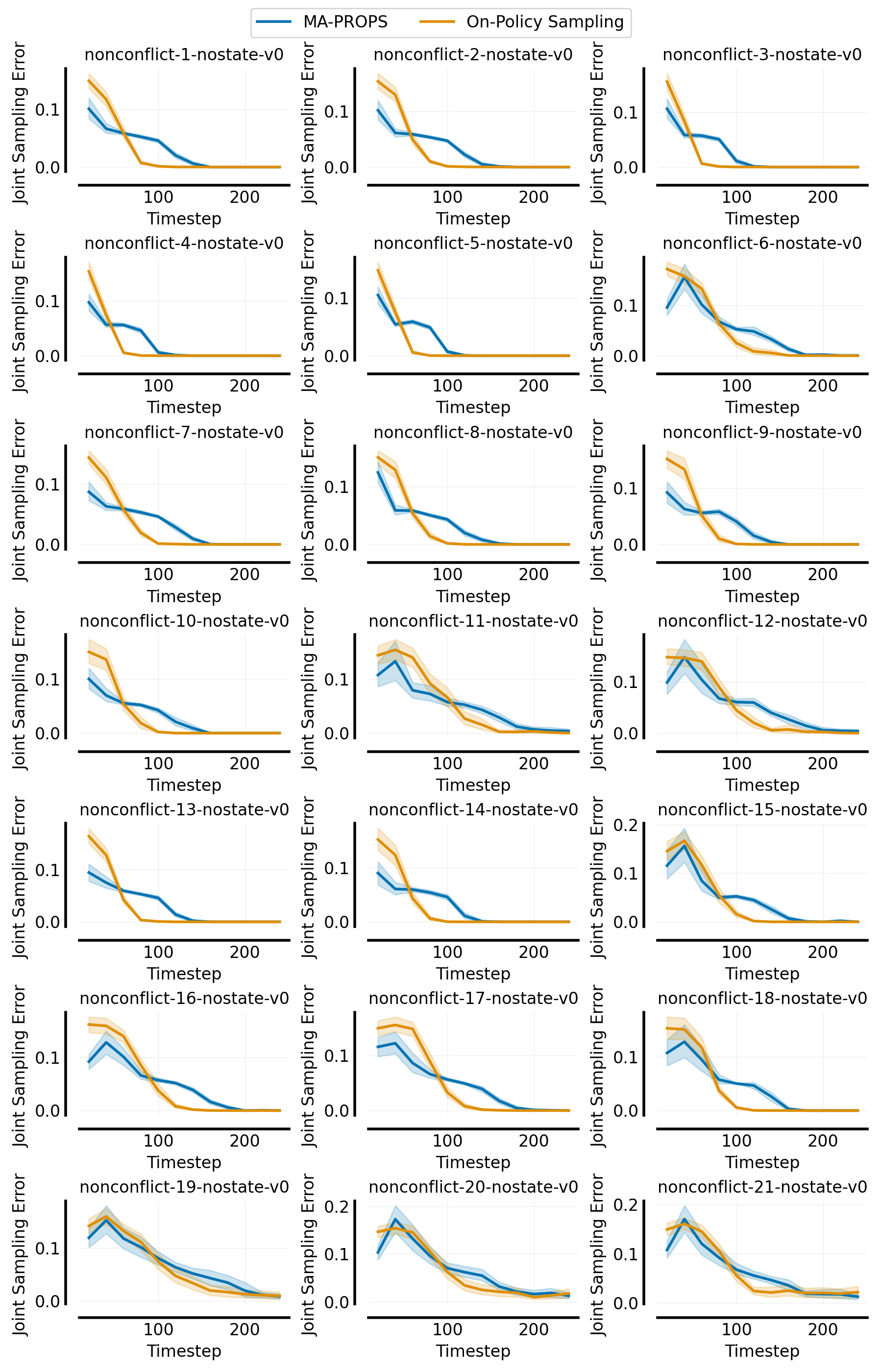}
    \caption{Mean joint sampling error for MAPPO + MA-PROPS training on all structurally distinct $2\times2$ no-conflict matrix games. Solid curves denote means over 100 seeds. Shaded regions denote 95\%  bootstrap confidence intervals. {\color{MidnightBlue}\underline{Takeaway:} Early in training, MA-PROPS reduces sampling error w.r.t.\@ on-policy sampling in all games except 19-21. CoSER reduces sampling error by a larger amount in all games (Fig.~\ref{fig:2x2_se_joint_all_ippo_maprops}).}}
    \label{fig:2x2_se_joint_all_ippo_props}
\end{figure}

\begin{figure}
    \centering
    \includegraphics[width=\linewidth]{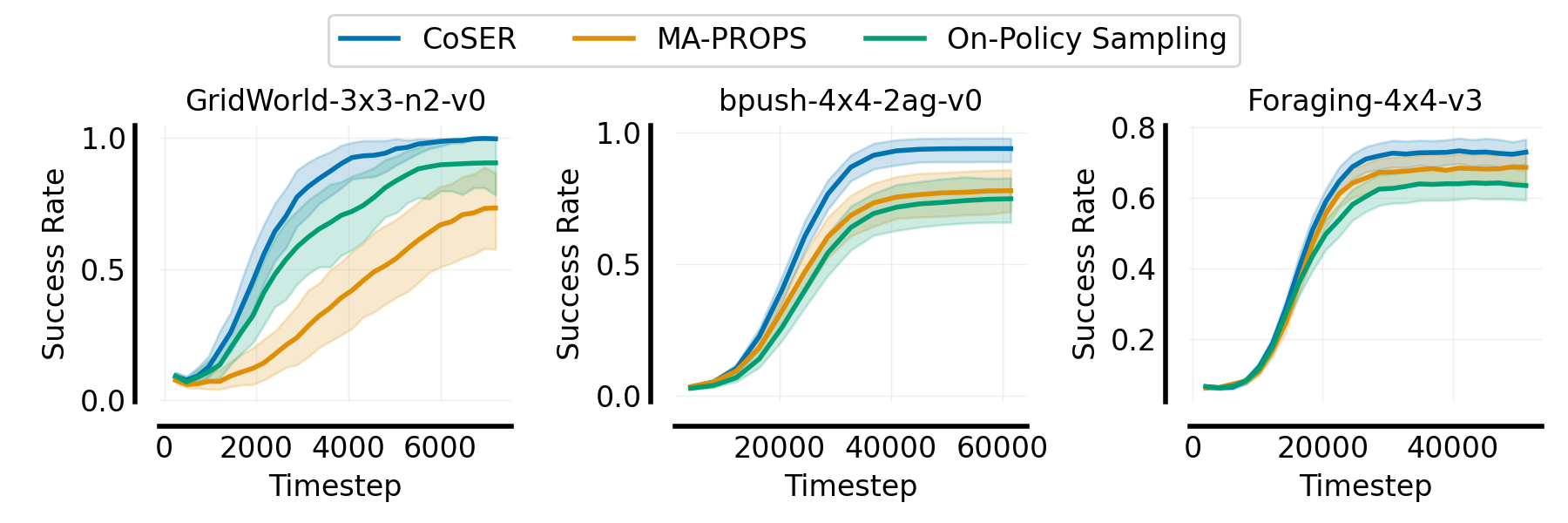}
    \caption{IPPO mean success rate during training for GridWorld, BoulderPush, and LBF. Solid curves denote means over 100 seeds. Shaded regions denote 95\%  bootstrap confidence intervals.}
    \label{fig:gw_bpush_lbf_ippo}
\end{figure}

\clearpage
\section{Hyperparameters}

All hyperparameters used in our experiments are listed in Tables~\ref{tab:se_updating_hyperparams_sweep}, ~\ref{tab:se_updating_hyperparams_fixed}, and~\ref{tab:hyperparameters_tuned}.
Our tuning procedure required running approximately one thousand jobs and requires access to many CPUs to do efficiently.
We ran all experiments on a computing cluster that enabled us to run several hundred jobs in parallel. The training budget for all experiments is fairly small (less than 100k timesteps), and CoSER and MA-PROPS jobs finish within 1-2 hours. Each job requires 0.6 GB of memory and 1.6 GB of disk.

\label{app:hyperparameters}

\begin{table*}[]
    \centering
    \begin{tabular}{l|l}
        \hline
        Behavior learning rate & $0.3, 0.03, 0.003$\\
        Behavior update frequency $m$ & $1, 4$ \\
        Behavior KL cutoff & $6$ \\
        Behavior clipping coefficient & $0.3, 1, 10$ \\
        \hline
    \end{tabular}
    \caption{Hyperparameters used in our hyperparameter sweep for training.}
    \label{tab:se_updating_hyperparams_sweep}
\end{table*}

\label{app:training}
\begin{table*}[]
    \centering
    \begin{tabular}{l|l}
        \hline
        Batch size & See Table.~\ref{tab:hyperparameters_tuned} \\
        Learning rate & See Table.~\ref{tab:hyperparameters_tuned}\\
        Number of update epochs & $4$ \\
        Number of minibatch updates per epoch & $4$ \\
        Discount factor $\gamma$ & $0.99$ \\
        generalized advantage estimation (GAE) & $0.95$ \\
        Advantage normalization & Yes \\
        Loss clip coefficient & $0.2$ \\
        Entropy coefficient & $0.01$ \\
        Value function coefficient & $0.5$ \\
        Gradient clipping (max gradient norm) & $0.5$ \\
        KL cut-off & None \\
        \hline
        Actor and critic network architectures & Multi-layer perceptron \\
        & with hidden layers $(64,64)$ \\
        \hline
        Optimizer & Adam~\citep{kigma2015adam} \\
        \hline
        Number of evaluation episodes & $100$ \\
        \hline
    \end{tabular}
    \caption{MAPPO/IPPO hyperparameters across all experiments. We implement MAPPO/IPPO on top of the \PPO{} implementation provided by CleanRL~\citep{huang2022cleanrl}.}
    \label{tab:se_updating_hyperparams_fixed}
\end{table*}

\begin{table*}[]
\centering
\begin{tabular}{l|c|c|c|c}
& PPO & PPO & CoSER/MA-PROPS & CoSER/MA-PROPS\\
Game & Batch Size & Learning Rate & Learning Rate & Update Frequency \\
\hline
LBF                     & 2048  & $0.01$    & $0.03$     & 1 \\
BoulderPush             & 4096  & $0.003$   & $0.03$     & 1 \\
GridWorld               & 256   & $0.01$     & $0.3$ & 1    \\
$3\times3$ matrix games & 45    & $0.1$     & $0.3$  & 1    \\
$2\times2$ matrix games & 20    & $0.1$     & $0.03$  & 1   \\
\hline
\end{tabular}
\caption{Tuned hyperparameters used in \RL{} training with CoSER and MA-PROPS.}
\label{tab:hyperparameters_tuned}
\end{table*}

\end{document}